\newtheorem{theorem}{Theorem}[section]
\newtheorem{theorem_k}{Theorem}[section]
\newtheorem{definition}{Definition}[section]
\newtheorem{lemma}{Lemma}[section]
\def\P{\mathcal{P}}
\def\reals{\mathbb{R}}
\def\Expectation{\mathbb{E}}
\newcommand{\vv}[1] {\mathbf{#1}}
\def\x{\vv{x}}
\def\w{\vv{w}}
\def\z{\vv{z}}
\tikzstyle{input3}=[circle,
\tikzstyle{input4}=[circle,
\tikzstyle{input}=[circle,
\tikzstyle{input2}=[circle,
\tikzstyle{matrx}=[rectangle,
\tikzstyle{matrx2}=[rectangle,
\tikzstyle{vecArrow} = [thick, decoration={markings,mark=at position
\tikzstyle{innerWhite} = [semithick, white,line width=1.4pt, shorten >= 4.5pt]
\tikzstyle{background}=[rectangle,
\icmltitlerunning{Recycling Randomness with Structured Matrices}
\icmltitlerunning{Recycling Randomness with Structured Matrices}
\begin{document} 

\twocolumn[
\icmltitle{Recycling Randomness with Structure for Sublinear time Kernel Expansions}

% It is OKAY to include author information, even for blind
% submissions: the style file will automatically remove it for you
% unless you've provided the [accepted] option to the icml2016
% package.
\icmlauthor{Krzysztof Choromanski}{kchoro@google.com}
%\icmladdress{Courant Institute of Mathematical Sciences, NY}
\icmlauthor{Vikas Sindhwani}{sindhwani@google.com}
%\icmladdress{Google Research, NY}
%\icmladdress{Courant Institute of Mathematical Sciences, NY and Facebook AI Research}

% You may provide any keywords that you 
% find helpful for describing your paper; these are used to populate 
% the "keywords" metadata in the PDF but will not be shown in the document
%\icmlkeywords{boring formatting information, machine learning, ICML}

\vskip 0.3in
]

%\footnotetext[1]{Equal contribution.}

%%%%%%%%%%%%%%%%%%%%%%%%%%%%%%%%%%%%%%%%%%%%%%%%%%%%%
%%%%%%%%%%%%%%%%%%%%%%%%%%%%%%%%%%%%%%%%%%%%%%%%%%%%%
%%%%%%%%%%%%%%%%%%%%%%%%%%%%%%%%%%%%%%%%%%%%%%%%%%%%%
%%%%%%%%%%%%%%%%%%%%%%%%%%%%%%%%%%%%%%%%%%%%%%%%%%%%%

\begin{abstract}
We propose a scheme for recycling Gaussian random vectors into structured matrices to approximate various kernel functions  in sublinear time via random embeddings.  Our framework includes the Fastfood construction of~\citet{LSS13}  as a special case, but also extends to Circulant, Toeplitz and Hankel matrices, and the broader family of structured matrices that are characterized by the concept of low-displacement rank.  We introduce notions of  coherence and graph-theoretic structural constants that control the approximation quality, and prove unbiasedness and low-variance properties of random feature maps that arise within our framework. For the case of low-displacement matrices, we show how the degree of structure and randomness can be controlled to reduce statistical variance at the cost of increased computation and storage requirements. Empirical results strongly support our theory and justify the use of a  broader family of structured matrices for scaling up kernel methods using random features.
\end{abstract} 
\def\K{\cal K}
\section{Introduction}
\label{sec:intro}
\def\K{{\cal K}}
Consider a $k$-dimensional feature map of the form,
\begin{equation}
\Psi(\vv{x}) = \frac{1}{\sqrt{k}} s(\vv{M} \vv{x}) \label{eq:featuremap}
\end{equation} where the input data vector $\vv{x}$ is drawn from $\reals^n$,  $s(\cdot)$ denotes a real-valued or complex-valued pointwise non-linearity (activation function), and $\vv{M}$ is a $k\times n$ Gaussian random matrix. It is well known that  as a function of a pair of data vectors,  the Euclidean inner product $\Psi(\vv{x})^T \Psi(\vv{z})$, converges to a positive definite kernel function $\K(\vv{x}, \vv{z})$  depending on the choice of the scalar nonlinearity, as $k\to\infty$. For example, the complex exponential nonlinearity $s(x) = e^{-i \frac{x}{\sigma}}$ corresponds to the Gaussian kernel~\cite{RR07}, while the rectified linear function (ReLU), $s(x) = \max(x, 0)$, leads to the Arc-cosine kernel~\cite{cho}.

In recent years, such random feature maps have been used to dramatically accelerate the training time and inference speed of kernel methods~\cite{KernelsBook} across a variety of statistical modeling problems~\cite{RR07, LeSong} and applications~\cite{TIMIT, VedaldiZisserman2012}. Standard linear techniques applied to random nonlinear embeddings of data are equivalent to learning with approximate kernels. To quantify the benefits, consider solving a kernel ridge regression task given $l$ training examples. With traditional kernel methods, dense linear algebra operations on the Gram matrix associated with the exact kernel function imply that the training complexity grows as $O(l^3 + l^2n)$ and the time to make a prediction on a test sample grows as $O(ln)$. By contrast, random feature approximations reduce training complexity to $O(l k^2 + lkn)$ and test speed to $O(kn)$. This is a major win on big datasets where $l$ is very large, provided that a small value of $k$ can provide a good approximation to the kernel function. 

In practice, though, the optimal value of $k$ is often large, albeit still much smaller than $l$. For example, in a speech recognition application~\cite{TIMIT} involving around two million training examples, about hundred thousand random features are required to achieve state of the art results. In such settings, the time to construct the random feature map is dominated by matrix multiplication against the dense Gaussian random matrix, which becomes the new computational bottleneck. To alleviate this bottleneck,~\cite{LSS13} introduce the ``Fastfood" approach where Gaussian random matrices are replaced by Hadamard matrices combined with diagonal matrices with Gaussian distributed diagonal entries. It was shown in~\cite{LSS13} that for the specific case of the complex exponential nonlinearity, the Fastfood feature maps provide unbiased estimates for the Gaussian kernel function, at the expense of additional statistical variance, but with the computational benefit of reducing the feature map construction time from $O(k n)$ to $O(k \log~n)$ by using the Fast Walsh-Hadamard transform for matrix multiplication.  The Fastfood construction for kernel approximations is akin to the use of structured matrices - in lieu of Gaussian random matrices - in Fast Johnson-Lindenstrauss transform (FJLT)~\cite{FJLT} for dimensionality reduction, fast compressed sensing~\cite{ToeplitzCS,CirculantCS}, and randomized numerical linear algebra techniques~\cite{HMT,Mahoney}
Specific structured matrices were recently applied for approximating angular kernels~\cite{choromanska2015binary}. 
Some heuristic results for approximating kernels with circulant matrices were given in~\cite{felix}. 

Our contributions in this paper are as follows:
\setlength{\itemindent}{0em}
\begin{compactitem}
\item We study a general family of structured random matrices that can be constructed by recycling a Gaussian random vector using a sequence 
of elementary generator matrices (introduced in Section 3). This family includes Circulant, Toeplitz and Hankel matrices. 
It also includes the Fastfood construction of~\cite{LSS13} as a special case. We show that fast sublinear time random feature 
maps obtained from these matrices provide unbiased estimates of the exact kernel, with variance comparable to the fully unstructured Gaussian case (Section 4). 
We introduce various structural coherence and graph-theoretic constants that control the quality of randomness we get from our model. 
Our approach generalizes across various choices of nonlinearities and kernel functions. 
%in particular, we extend the Fastfood analysis in~\cite{LSS13} 
%beyond the Gaussian kernel case. 
\item Of particular interest for us is the class of generalized structured matrices that have low-displacement rank~\cite{Pan,StructuredTransforms}. Such matrices span an increasingly rich class of structures as the displacement rank is increased: from Circulant and Toeplitz matrices, to inverses and products of Toeplitz matrices, and more. The displacement rank provides a knob with which the degree of structure and randomness can be controlled to tradeoff computational and storage requirements against statistical variance. 
\item We provide empirical support for our theoretical results (Section 5). In particular, we show that Circulant, Fastfood and low-displacement 
Toeplitz-like matrices provide high quality sublinear-time feature maps for approximating various kernels. 
With increasing displacement rank, the quality of the approximation approaches that of the fully Gaussian random matrix.
\end{compactitem}

%We start by giving a brisk background on random feature maps and structured matrices in section 2. In section 3, we introduce a general model and associated structural constants for constructing compact random matrices from Gaussian vectors, and show how a broad family of structured random matrices are covered by our framework as special cases.  In section 4, we prove unbiasedness and low-variance properties of our proposed feature maps. In section 5, we conclude this paper with empirical support for our theoretical contributions.

\section{Background and Preliminaries}
We start by giving a brisk background on random feature maps and structured matrices.

\subsection{Random Embeddings, Nonlinearities and Kernels}

Random feature maps may be viewed as arising from Monte-Carlo approximations to integral representations of kernel functions. The original construction by ~\citet{RR07} was motivated by a classical result that characterizes the class of shift-invariant positive definite functions.

\begin{theorem}[Bochner's Theorem~\cite{bochner1933}] A continuous shift-invariant scaled kernel
function $\K(\x,\z)\equiv \phi(\x-\z)$ on $\reals^n$ is positive definite if and
only if it is the Fourier transform of a unique finite probability measure $p$ on
$\reals^n$.
That is, for any $\x,\z \in \reals^d $,
\begin{equation}
\K(\x,\z) = \int_{\reals^n} e^{-i (\x-\z)^T\vv{w}} p(\w) d\w =
\Expectation_{\vv{w}\sim p}[ e^{-i (\x-\z)^T\vv{w}}]~.\nonumber\label{eq:bochner}
\end{equation}
\end{theorem}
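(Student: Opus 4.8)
The plan is to prove the two directions separately; sufficiency is routine, while necessity is the substantive part. For the easy direction, suppose $\phi$ is the Fourier transform of a finite nonnegative measure $p$. Given points $\x_1,\dots,\x_m\in\reals^n$ and coefficients $c_1,\dots,c_m\in\complex$, I would substitute the integral representation into the Gram quadratic form and interchange sum and integral to obtain
\begin{equation*}
\sum_{j,k} c_j\conj{c_k}\,\phi(\x_j-\x_k)
= \int_{\reals^n}\Bigl|\sum_{j} c_j\, e^{-i\x_j^T\w}\Bigr|^2 p(\w)\,d\w \;\ge\; 0,
\end{equation*}
so $\K$ is positive definite; the total mass of $p$ equals $\phi(\vv{0})=\K(\x,\x)$, which the normalization fixes to $1$.

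For the converse, assume $\phi$ is continuous and positive definite. I would first record the elementary consequences $\phi(\vv{0})\ge 0$ (normalize to $\phi(\vv{0})=1$), $|\phi(\t)|\le\phi(\vv{0})$, and $\phi(-\t)=\conj{\phi(\t)}$. The core idea is to build a candidate density by Gaussian regularization: for each $\lambda>0$ set
\begin{equation*}
p_\lambda(\w) = \frac{1}{(2\pi)^n}\int_{\reals^n}\phi(\vv{s})\,e^{i\vv{s}^T\w}\,e^{-\lambda\|\vv{s}\|^2/2}\,d\vv{s},
\end{equation*}
and the key claim is $p_\lambda(\w)\ge 0$ for every $\w$. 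To establish this I would first upgrade positive definiteness from its pointwise (finite-sum) form to its integral form, namely $\int\!\int\phi(\u-\v)\,g(\u)\,\conj{g(\v)}\,d\u\,d\v\ge 0$ for suitable $g$, by approximating the double integral with Riemann sums (each nonnegative by definition) and passing to the limit using continuity and boundedness of $\phi$. Choosing $g(\u)=e^{i\u^T\w}e^{-\lambda\|\u\|^2/4}$ and changing variables to $\t=\u-\v$ then collapses the double integral onto a positive multiple of $p_\lambda(\w)$ (after a harmless rescaling of $\lambda$), yielding nonnegativity.

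It remains to let $\lambda\to 0^+$. By Fourier inversion, the transform of $p_\lambda$ is recovered explicitly as
\begin{equation*}
\int_{\reals^n} e^{-i\t^T\w}\,p_\lambda(\w)\,d\w = \phi(\t)\,e^{-\lambda\|\t\|^2/2}.
\end{equation*}
Evaluating at $\t=\vv{0}$ gives $\int p_\lambda(\w)\,d\w=\phi(\vv{0})=1$, so each $p_\lambda\,d\w$ is a probability measure, and the display shows its transform converges pointwise to $\phi(\t)$ as $\lambda\to 0$. Since $\phi$ is continuous at the origin with $\phi(\vv{0})=1$, L\'evy's continuity theorem produces a limiting probability measure $p$ with $\int e^{-i\t^T\w}\,dp(\w)=\phi(\t)$, i.e.\ $\phi$ is the Fourier transform of $p$; uniqueness of $p$ follows since a finite measure is determined by its Fourier transform. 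I expect the main obstacle to be the nonnegativity claim for $p_\lambda$: getting the Gaussian bookkeeping in the change of variables right so that the positive quadratic form collapses exactly onto $p_\lambda$, and rigorously justifying the Riemann-sum passage to the integral form of positive definiteness under only continuity. The limit $\lambda\to 0$ is comparatively soft once tightness---equivalently, continuity of $\phi$ at $\vv{0}$---is available.
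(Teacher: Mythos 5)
The paper contains no proof of this statement: Bochner's theorem is quoted as a classical result, cited to Bochner (1933), and used purely as background motivating the random-feature construction. So the comparison here is with the classical literature rather than with anything in the paper. Your argument is the standard harmonic-analysis proof (Gaussian regularization plus L\'evy's continuity theorem), and it is essentially correct. The easy direction's quadratic-form computation is right. In the converse, the choice $g(\u)=e^{i\u^T\w}e^{-\lambda\|\u\|^2/4}$, the substitution $\t=\u-\v$, and the completion of the square $\|\t+\v\|^2+\|\v\|^2=2\|\v+\t/2\|^2+\|\t\|^2/2$ make the Gaussian integral over $\v$ produce the factor $(2\pi/\lambda)^{n/2}e^{-\lambda\|\t\|^2/8}$, so the integral form of positive definiteness collapses onto a positive multiple of $p_{\lambda/4}(\w)$; since $\lambda>0$ is arbitrary, nonnegativity of every $p_\lambda$ follows, exactly as your ``harmless rescaling'' anticipates. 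The Riemann-sum upgrade of positive definiteness to its integral form is likewise standard for bounded continuous $\phi$ and rapidly decaying continuous $g$.

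One step needs repair. You invoke $L^1$ Fourier inversion to write $\int e^{-i\t^T\w}p_\lambda(\w)\,d\w=\phi(\t)e^{-\lambda\|\t\|^2/2}$ and then read off the total mass at $\t=\vv{0}$; but inversion in this form presupposes $p_\lambda\in L^1$, which is precisely the finiteness of total mass you are trying to establish --- a circularity as written. The standard repair uses the nonnegativity you have already proved: compute $\int p_\lambda(\w)e^{-\epsilon\|\w\|^2/2}\,d\w$ by Fubini (the double integral is absolutely convergent because $\phi$ is bounded); the inner integral over $\w$ yields the Gaussian density $(2\pi\epsilon)^{-n/2}e^{-\|\vv{s}\|^2/(2\epsilon)}$, so the expression is an average of the bounded continuous function $\phi(\vv{s})e^{-\lambda\|\vv{s}\|^2/2}$ against a mollifier concentrating at $\vv{s}=\vv{0}$, hence tends to $\phi(\vv{0})=1$ as $\epsilon\to 0^+$. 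Monotone convergence then gives $\int p_\lambda=1$, so $p_\lambda\in L^1$, inversion is legitimate, and the $\lambda\to 0^+$ limit via L\'evy continuity together with uniqueness of a finite measure with a given Fourier transform completes the proof exactly as you describe.
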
 \label{thm:bochner_R_d}
Bochner's theorem stablishes one-to-one correspondence between shift-invariant kernel functions and probability densities on $\reals^n$, via the Fourier transform.  In the case of the Gaussian kernel with bandwidth $\sigma$, the associated density is also Gaussian with covariance matrix $\sigma^{-2}$ times the identity. 

While studying synergies between kernel methods and deep learning,~\cite{cho} introduce $b^{th}$-order {\it arc-cosine} kernels via the following integral representation:
$$
{\K}_{b}(\vv{x}, \vv{z}) = \int_{\reals^d} i(\vv{w}^T \vv{x})  i(\vv{w}^T \vv{z}) (\vv{w}^T\vv{x})^b (\vv{w}^T \vv{z})^b \ p(\w) d\w 
$$ 
where $i(\cdot)$ is the step function, i.e.  $i(x) = 1$ if $x>0$ and $0$ otherwise; and the density $p$ is chosen to be standard Gaussian. 
These kernels evaluate inner products in the  representation induced by an infinitely wide single  hidden layer neural network with random 
Gaussian weights, and admit closed form expressions in terms of the angle $\theta = cos^{-1} (\frac{\vv{x}^T\vv{z}}{\|\vv{x}\|_2\|\vv{z}\|_2})$ between $\vv{x}$ and $\vv{z}$:
\begin{eqnarray}
\K_{0}(\vv{x}, \vv{z}) &=& 1 - \frac{\theta}{\pi}\label{eq:stepkernel}\\  
\K_{1}(\vv{x}, \vv{z}) &=&  \frac{\|\vv{x}\|_2 \|\vv{z}\|_2}{\pi} [sin(\theta) + (\pi - \theta) cos(\theta)] \label{eq:relukernel}
\end{eqnarray} where $\| \cdot\|_2$ denotes $l_2$ norm. 

Monte Carlo approximations to the integral representations above lead to the following,
\begin{equation}
\K(\x, \z) \approx \frac{1}{k}\sum_{j=1}^k s(\x^T \w_j) s(\z^T \w_j) =
\Psi(\x)^T \Psi(\z) 
\end{equation} where the feature map $\Psi(\x)$ has the form given in Eqn.~\ref{eq:featuremap},  with rows of $\vv{M}$, 
i.e. $\vv{w}_j$ vectors, drawn from the Gaussian density, and the nonlinearity $s$ set to the following:  complex exponential, $s(x) = e^{i \frac{x}{\sigma}}$, for the Gaussian kernel with bandwidth $\sigma$;  hard-thresholding, $s(x) = i(x)$, for the angular similarity kernel in Eqn.~\ref{eq:stepkernel}; and  ReLU activation,  $s(x) = \max(x, 0)$, for the first order arc-cosine kernel in Eqn.~\ref{eq:relukernel}.

\subsection{Structured Matrices}

A $m\times n$ matrix is called a structured matrix if it satisfies the following two properties: (1) it has much fewer degrees of freedom than 
$mn$ independent entries, and hence can be implicitly stored more efficiently than general matrices, and (2) the structure in the matrix can 
be exploited for fast linear algebra operations such as fast matrix-vector multiplication.  Examples include the Discrete Fourier Transform (DFT), the Discrete Cosine Transform (DCT) and the Walsh-Hadamard Transform (WHT) matrices.  Here, we give other examples particularly relevant to this paper. The matrices described below are square. Rectangular matrices can be obtained by appropriately selecting rows or columns.
\iffalse
{\bf Hadamard Matrices:} These matrices are defined for $m = n = 2^k$ recursively as follows:
\begin{equation}
\vv{H}_0 = 1,~~~~~~~~~~~\vv{H}_k = \frac{1}{\sqrt{2}} \left(\begin{array}{cc} \vv{H}_{k-1} & \vv{H}_{k-1}\\\vv{H}_{k-1} & -\vv{H}_{k-1} \end{array}\right)
\end{equation} Note that $\frac{1}{\sqrt{n}}\vv{H}$ is an orthonormal matrix. 
Hadamard matrices support $O(n~\log n)$ matrix-vector multiplication time via the Fast Hadamard transform, a variant of FFT. 
\fi

\def\krylov{\textrm{krylov}}
{\bf Circulant Matrices}: These matrices are intimately associated with circular  
convolutions and have been used for fast compressed sensing in~\cite{CirculantCS}. 
A $n\times n$ Circulant matrix is completely determined by  its first column/row, i.e., $n$ parameters. Each column/row of a Circulant 
matrix is generated by cyclically down/right-shifting the previous column/row. A {\it skew-Circulant} matrix has identical structure to Circulant, except that the upper triangular part of the matrix is negated.  This general structure looks like, 
{\small
\begin{eqnarray}
\left[\begin{array}{cccc} 
{\bf \color{red}{g_0}} & f {\bf g_{n-1}} & \ldots & f {\bf \color{blue} g_1}\\
{\bf \color{blue} g_1}& {\bf \color{red} g_0} & \ldots & \vdots\\
\vdots & \vdots & \vdots & f{\bf g_{n-1}} \\
{\bf g_{n-1}} & \ldots & {\bf \color{blue} g_1} & {\bf \color{red}g_0}
 \end{array}
 \right] \nonumber
\end{eqnarray}}
with $f=1$ for Circulant and $f=-1$ for skew-Circulant matrix.  Both these matrices admit $O(n~\log~n)$ matrix-vector multiplication as they are diagonalized by the DFT matrix~\cite{Pan}. We will use the notation $\texttt{circ}[\vv{g}]$ and $\texttt{scirc}[\vv{g}]$ for Circulant and skew-Circulant matrices respectively.

{\bf Toeplitz and Hankel Matrices:} These matrices implement discrete linear convolution and arise naturally in dynamical systems and time series analysis. Toeplitz matrices are characterized by constant diagonals as follows,
{\small \begin{eqnarray}
\left[\begin{array}{cccc} 
{\bf \color{red}{t_0}} & {\bf \color{green} t_{-1}} & \ldots & {\bf \color{cyan}{t_{-(n-1)}}}\\
{\bf \color{blue} t_1}& {\bf \color{red} t_0} & \ldots & \vdots\\
\vdots & \vdots & \vdots & {\bf \color{green} t_{-1}} \\
{\bf t_{n-1}} & \ldots & {\bf \color{blue} t_1} & {\bf \color{red}t_0}
 \end{array}
 \right] \nonumber
\end{eqnarray}} Closely related Hankel matrices have constant anti-diagonals. Toeplitz-vector multiplication can be reduced to $O(n~\log~n)$ Circulant-vector multiplication. For detailed properties of Circulant and Toeplitz matrices, we point the reader to~\cite{GrayCirculantToeplitz}

{\bf Structured Matrices with Low-displacement Rank}: The notion of displacement operators and displacement rank~\cite{Golub, Pan, KKM79} can be used to broadly generalize various classes of structured matrices.  For example, under the action of the {\it Sylvester displacement operator} defined as $L[\vv{T}] = \vv{Z}_1 \vv{T} - \vv{T} \vv{Z}_{-1}$,  every Toeplitz matrix can be transformed into a matrix of rank at most $2$ using elementary shift and scale operations implemented by matrices of the form $\vv{Z}_f = [\vv{e}_2 \vv{e}_3\ldots \vv{e}_n~f \vv{e}_1]$ for $f=\pm 1$ where $\vv{e}_1\ldots \vv{e}_n$ are column vectors representing the standard basis of $\reals^n$. 

For a given displacement rank parameter $r$, the class of matrices for which the rank of $L[\vv{T}]$ is at most $r$ is called {\it Toeplitz-like}. Remarkably, this class of matrices admits a closed-form parameterization in terms of the low-rank factorization of $L[\vv{T}]$:

\begin{theorem}[Parameterization of Toeplitz-like matrices with displacement rank $r$~\cite{Pan}]:  If an $n\times n$ matrix $\vv{T}$ satisfies $rank(\vv{Z}_1 \vv{T} - \vv{T} \vv{Z}_{-1}) \leq r$,  then it can be written as,
\begin{equation}
\vv{T} = \sum_{i=1}^r \texttt{circ}[\vv{g}^i]~\texttt{scirc}[\vv{h}^i] \label{eq:circ_skewcirc}
\end{equation} for some choice of vectors $\{\vv{g}^i, \vv{h}^i\}_{i=1}^r \in \reals^n$.
\end{theorem}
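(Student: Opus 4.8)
The plan is to exploit the linearity and invertibility of the Sylvester displacement operator $L[\vv{T}] = \vv{Z}_1\vv{T} - \vv{T}\vv{Z}_{-1}$, reducing the claim to two facts: that $L$ is a bijection on the space of $n \times n$ matrices, and that $L$ carries every product $\texttt{circ}[\vv{g}]\,\texttt{scirc}[\vv{h}]$ to a rank-one matrix, with these images sweeping out \emph{all} rank-one matrices as $\vv{g},\vv{h}$ vary. Granting both, the theorem follows immediately: writing the rank-$\le r$ matrix $L[\vv{T}]$ as a sum of $r$ rank-one terms $\sum_{i=1}^r \vv{u}^i (\vv{v}^i)^{\top}$ (padding with zeros if the rank is strictly below $r$), I would choose $\vv{g}^i,\vv{h}^i$ realizing each term, observe by linearity that $L\big[\sum_i \texttt{circ}[\vv{g}^i]\texttt{scirc}[\vv{h}^i]\big] = L[\vv{T}]$, and invoke injectivity of $L$ to conclude $\vv{T} = \sum_{i=1}^r \texttt{circ}[\vv{g}^i]\texttt{scirc}[\vv{h}^i]$.

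For invertibility, I would appeal to the classical theory of Sylvester equations: the linear map $\vv{X}\mapsto A\vv{X} - \vv{X}B$ is nonsingular precisely when $A$ and $B$ have disjoint spectra. Here $A = \vv{Z}_1$ satisfies $\vv{Z}_1^{\,n} = \vv{I}$, so its eigenvalues are the $n$-th roots of unity, while $B = \vv{Z}_{-1}$ satisfies $\vv{Z}_{-1}^{\,n} = -\vv{I}$, so its eigenvalues are the $n$-th roots of $-1$; these two sets are disjoint, making $L$ a bijection.

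The second fact rests on the elementary observation that $\texttt{circ}[\vv{g}]$ is a polynomial in $\vv{Z}_1$ (hence commutes with $\vv{Z}_1$) and $\texttt{scirc}[\vv{h}]$ is a polynomial in $\vv{Z}_{-1}$ (hence commutes with $\vv{Z}_{-1}$), together with the identity $\vv{Z}_1 - \vv{Z}_{-1} = 2\,\vv{e}_1\vv{e}_n^{\top}$, which holds because the two generators agree except in the last column. Pushing the commutations through yields
\begin{equation}
L\big[\texttt{circ}[\vv{g}]\,\texttt{scirc}[\vv{h}]\big] = \texttt{circ}[\vv{g}]\,(\vv{Z}_1 - \vv{Z}_{-1})\,\texttt{scirc}[\vv{h}] = 2\,\vv{g}\,\vv{s}^{\top}, \nonumber
\end{equation}
where $\vv{g} = \texttt{circ}[\vv{g}]\vv{e}_1$ is the first column of the circulant and $\vv{s}^{\top} = \vv{e}_n^{\top}\texttt{scirc}[\vv{h}]$ is the last row of the skew-circulant, which is simply the reversal of $\vv{h}$. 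Since $\vv{g}$ is arbitrary and $\vv{h}\mapsto\vv{s}$ is a bijection, the products $\{2\,\vv{g}\,\vv{s}^{\top}\}$ exhaust all rank-one matrices, supplying the ingredient needed above.

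I expect the main obstacle to be the invertibility step: one must verify that $\vv{Z}_1$ and $\vv{Z}_{-1}$ have disjoint spectra and invoke the correct Sylvester nonsingularity criterion, since without bijectivity of $L$ neither the existence of the generators $\vv{g}^i,\vv{h}^i$ nor the uniqueness needed to recover $\vv{T}$ would be guaranteed. The remaining pieces, namely the commutation identities and the rank-one reduction, are routine once the algebraic structure of $\vv{Z}_1$ and $\vv{Z}_{-1}$ is in hand.
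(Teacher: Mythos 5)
Your proof is correct and complete. One important point of comparison: the paper itself does not prove this statement at all --- it is quoted as a known background result with a citation to Pan's work on displacement structure, so there is no internal proof to measure your argument against; what you have written is essentially the standard proof from that literature, supplied in full. Each of your three ingredients checks out: (i) $\vv{Z}_1$ and $\vv{Z}_{-1}$ are companion matrices of $x^n-1$ and $x^n+1$, so their spectra (the $n$-th roots of $1$ and of $-1$) are disjoint, and the Sylvester--Rosenblum criterion makes $L[\vv{T}]=\vv{Z}_1\vv{T}-\vv{T}\vv{Z}_{-1}$ a bijection; (ii) since $\texttt{circ}[\vv{g}]$ is a polynomial in $\vv{Z}_1$, $\texttt{scirc}[\vv{h}]$ is a polynomial in $\vv{Z}_{-1}$, and $\vv{Z}_1-\vv{Z}_{-1}=2\vv{e}_1\vv{e}_n^{T}$, the computation $L\bigl[\texttt{circ}[\vv{g}]\,\texttt{scirc}[\vv{h}]\bigr]=2\,\vv{g}\,\vv{s}^{T}$ is valid, with $\vv{s}$ the reversal of $\vv{h}$ (correctly with no sign flips, since the last row of a skew-circulant lies entirely in its non-negated lower triangle), and these products sweep out all matrices of rank at most one; (iii) linearity and injectivity of $L$ then convert any decomposition of $L[\vv{T}]$ into $r$ rank-one terms (padded with zeros if needed) into the claimed representation of $\vv{T}$. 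Two points worth making explicit in a final write-up: the factor of $2$ is silently absorbed by rescaling the $\vv{g}^i$, which is exactly why the paper's statement carries no constant in front of the sum, whereas the Stein-displacement version of this theorem (stated elsewhere in the paper) carries a prefactor $\frac{1}{1-ab}$; and the Sylvester nonsingularity criterion is usually stated over $\complex$, but since $L$ is a real linear operator whose complexification is nonsingular, it is a bijection of $\reals^{n\times n}$ as well, so the generators $\vv{g}^i,\vv{h}^i$ can indeed be taken real.
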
   The family of matrices expressible by Eqn.~\ref{eq:circ_skewcirc} is very rich~\cite{Pan}, i.e., it covers (i) 
all Circulant and Skew-circulant matrices for $r=1$, (ii) all Toeplitz matrices and their inverses for $r=2$, (iii) Products, inverses, linear combinations of distinct 
Toeplitz matrices with increasing $r$, and (iv) all $n\times n$ matrices for $r=n$. 
Since Toeplitz-like matrices under the parameterization of Eqn.~\ref{eq:circ_skewcirc} 
are a sum of products between Circulant and Skew-circulant matrices, they inherit fast FFT based matrix-vector multiplication with cost $O(nr log~n)$,
where $r$ is the displacement rank. Hence, $r$ provides a knob on the degree of structure imposed on the matrix with which storage requirements,  computational constraints and statistical capacity can be explicitly controlled. Recently such matrices were used in the context of  learning mobile-friendly neural networks in~\cite{StructuredTransforms}. 
We note in passing that the displacement rank framework generalizes to other types of base structures (e.g. Vandermonde); see~\cite{Pan}.
 
%This displacement rank approach  unifies algorithms and complexity analysis for structured matrices~\cite{KKM79},~\cite{Pan},~\cite{KS95}. 

%\begin{compactitem}
%\item The complex exponential $s(x) = e^{-i \frac{x}{\sigma}}$ corresponds to the Gaussian kernel, $k(\vv{x}, \vv{z}) = e^{\frac{-\|\vv{x} - \vv{z}\|}{2\sigma^2}}$.
%\item 
%\end{compactitem}

%\begin{table}[h]
%\begin{center}
%{\small
%\begin{tabular}{cccc}
%Nonlinearity & $s(x)$ & Kernel & $k(\vv{x}, \vv{z})$\\
%\hline
%Complex-Exp. & $e^{-i x/\sigma}$  & Gaussian & $$\\
%ReLU & $\max(x, 0)$ & Arc-cosine & $\frac{\|x\|_2 \|z\|_2}{\pi} [sin(\theta) + (\pi - \theta) cos(\theta)]$\\
%Step & $1[x>0]$ & Angle	 & $(\pi - \theta)$\\
%\hline
%\end{tabular}
%}
%\end{center}
%\end{table}

\subsection{FastFood}
In the context of fast kernel approximations,~\cite{LSS13} introduce the Fastfood  technique where the matrix $\vv{M}$ in Eqn.~\ref{eq:featuremap} is parameterized by a 
product of diagonal and simple matrices as follows:
\begin{equation}
\vv{F} = \frac{1}{\sqrt{n}}\vv{S}\vv{H}\vv{G} \vv{P} \vv{H} \vv{B}.\label{eq:fastfood}
\end{equation} Here, $\vv{S}, \vv{G}, \vv{B}$ are diagonal random matrices, $\vv{P}$ is a permutation matrix and $\vv{H}$ is the Walsh-Hadamard matrix.  
The $k\times n$ matrix $\vv{M}$ is obtained by vertically stacking $k/n$ independent copies of the $n\times n$ matrix $\vv{F}$. Multiplication  against such a matrix can be performed in time $O(k\log~n)$. The authors prove that (1) the Fastfood approximation is 
unbiased, (2) its variance is at most the variance of standard Gaussian random features with an additional $O(\frac{1}{k})$ 
term, and (3) for a given error probability $\delta$, the pointwise approximation error of a $n\times n$ block of Fastfood is 
at most $O(\sqrt{\log(n/\delta)})$ larger than that of standard Gaussian random features. However, note that the Fastfood analysis 
is limited to the Gaussian kernel and their variance bound uses properties of the complex exponential. The authors also conjecture 
that the Hadamard matrix $\vv{H}$ above, can be replaced by any matrix $\vv{T}$ such that $\vv{T}/\sqrt{n}$ is orthonormal, the 
maximum entry in $\vv{T}$ is small, and matrix-vector product against $\vv{T}$ can be computed in $O(n\log~n)$ time.

\section{Structured Matrices from Gaussian Vectors}
In this section, we present a general structured matrix model that allows a small Gaussian vector to be recycled in order to mimic the properties of a Gaussian random matrix suitable for generating random features. We first introduce some basic concepts in our construction. Note that we emphasize intuitions in our exposition - formal proofs are provided in our supplementary material.

\subsection{The $\mathcal{P}$-model}
\label{sec:p_model}

{\bf Budget of Randomness}:  Let $t$ be some given parameter. Consider the column vector $\vv{g}=(g_{1},...,g_{t})^T$, where each entry is an independent Gaussian taken from $\mathcal{N}(0,1)$. This vector stands for the ``budget of randomness'' used in our structured matrix construction scheme.

Our goal is to recycle the Gaussian vector $\vv{g}$ to construct random matrices with desirable properties. This is accomplished using a sequence of matrices which we call the $\P$-model.

\begin{definition}[$\P$-model] 
\label{def:p_model}
Given the budget of uncertainty parameter $t$, a sequence  of 
$m$ matrices with unit $l_{2}$ norm columns, denoted as $\P = \{\vv{P}_i\}_{i=1}^m$, where $\vv{P}_i \in \reals^{t \times n}$,  specifies a $\P$-model. 
Such a sequence defines an $m\times n$ random matrix of the form:
 \begin{equation}
 \vv{S}[\P] = \left(\begin{array}{c}\vv{g}^T\vv{P}_1 \\\vv{g}^T\vv{P}_2\\ \vdots\\\vv{g}^T\vv{P}_m\end{array}\right)\label{eq:pmodel}
 \end{equation} where $\vv{g}$ is a Gaussian random vector of length $t$.
\end{definition}

In the constructions of interest to us, the sequence $\P$ is designed to separate structure from Gaussian randomness; 
though elements of $\P$ can be deterministic or itself random,  Gaussianity is restricted to the vector $\vv{g}$.  
The ability of $\P$ to recycle a Gaussian vector effectively depends on certain structural constants that we now define.

\begin{definition}[Coherence of a $\P$-model] For $\P = \{ \vv{P}_i \}_{i=1}^m$, let $\vv{P}_{ij}$ denote the $j^{th}$ 
column of the $i^{th}$ matrix. The coherence of a $\P$-model is defined as,
\begin{equation}
\mu[\P] = \max_{1 \leq i \leq j \leq m}\sqrt{\frac{\sum_{1 \leq n_{1} < n_{2} \leq n} (\vv{P}^{T}_{i,n_{1}} \vv{P}_{j,n_{2}})^{2}}{n}}
\end{equation}
\end{definition}
Note that $\mu[\P]$ is a maximum over all pairs of rows $1 \leq i \leq j \leq m$ of the rescaled sums of cross-correlations $\vv{P}^{T}_{i,n_{1}} \vv{P}_{j,n_{2}}$ 
for all pairs of different column indices $n_{1},n_{2}$.
Lower values of $\mu[\P]$ will lead to better quality models. In practice, as we will see in subsequent analysis, it suffices if $\mu[\P] = O(poly(\log(n)))$ which is the case for instance for Toeplitz and Circulant matrices.

The coherence of the $\mathcal{P}$-model is an extremal statistic of pairwise correlations.  We couple it with another set of objects describing global structural properties of the model, namely the \textit{coherence graphs}.
\begin{definition}[Coherence Graphs for $\P$-model and their Chromatic Numbers] 
Let $1 \leq i,j \leq m$. We define by $\mathcal{G}_{i,j}$ an undirected graph
with the set of vertices $V(\mathcal{G}_{i,j}) = \{\{n_{1},n_{2}\} : 1 \leq n_{1} \neq n_{2} 
\leq n$ and $\vv{P}^{T}_{i,n_{1}}\vv{P}_{j,n_{2}} \neq 0\}$ and the set of edges 
$E(\mathcal{G}_{i,j}) = \{\{\{n_{1},n_{2}\},\{n_{2},n_{3}\}\}:\{n_{1},n_{2}\},\{n_{2},n_{3}\} 
\in V(\mathcal{G}_{i,j})\}$. In other words, edges are between these vertices such that their corresponding $2$-element subsets intersect.
The chromatic number $\chi(i, j)$ of a graph $\mathcal{G}_{i,j}$ is the smallest number of colors that can be used to color all vertices
of $\mathcal{G}_{i,j}$ in such a way that no two adjacent vertices share the same color.
\end{definition}

The chromatic number of a $\mathcal{P}$-model is defined as follows:

\begin{definition}[Chromatic number of a $\mathcal{P}$-model]
The chromatic number $\chi[\mathcal{P}]$ of a $\mathcal{P}$-model is given as:
$$\chi[\mathcal{P}] = \max_{1 \leq i \leq j \leq m} \chi(i, j),$$
where $\mathcal{G}_{i,j}$ are associated coherence graphs.
\end{definition}
As it was the case for the coherence $\mu[\mathcal{P}]$, smaller values of the chromatic number $\chi[\mathcal{P}]$ lead to better theoretical results regarding 
the quality of the model.
Intuitively speaking, coherence graphs encode in a compact combinatorial way correlations between different rows of the structured matrix produced 
by the $\mathcal{P}$-model.
The chromatic number $\chi[\mathcal{P}]$ is a single combinatorial parameter measuring quantitatively these dependencies. It can be easily computed
or at least upper-bounded (which is enough for us) for $\mathcal{P}$-models related to all structured matrices considered in this paper.
The following is a well-known fact from graph theory:
\begin{lemma}
\label{graph_lemma}
The chromatic number $\chi(G)$ of an undirected graph $G$ with maximum degree $d_{max}$ satisfies: $\chi(G) \leq d_{max} + 1$. 
\end{lemma}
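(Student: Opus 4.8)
The plan is to prove this via a greedy (sequential) coloring argument, which is the standard route for degree-based chromatic bounds. I would fix an arbitrary enumeration $v_1, v_2, \ldots, v_N$ of the vertices of $G$, where $N = |V(G)|$, together with a palette of exactly $d_{max} + 1$ colors. Then I would color the vertices one at a time in this order, always assigning to the current vertex the smallest-indexed color in the palette that is not already used on any of its neighbors colored in earlier steps.

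The crux of the argument is a simple counting observation applied at each step. When we reach vertex $v_i$, its total number of neighbors in $G$ is at most $d_{max}$ by hypothesis, so in particular the number of its \emph{already-colored} neighbors is at most $d_{max}$. These neighbors forbid at most $d_{max}$ colors from the palette; since the palette contains $d_{max} + 1$ colors, at least one admissible color always remains. Hence the greedy rule never gets stuck, and every vertex ends up receiving a color drawn from the same set of $d_{max}+1$ colors.

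It remains to verify that the resulting coloring is proper. This follows directly from the rule: for any edge $\{v_i, v_j\}$ with $i < j$, at the moment $v_j$ was colored the vertex $v_i$ had already been colored, and the rule explicitly excludes the colors of all previously colored neighbors; thus $v_i$ and $v_j$ receive distinct colors. Since a proper coloring using $d_{max}+1$ colors therefore exists, we conclude $\chi(G) \le d_{max}+1$.

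I do not anticipate a genuine obstacle here, as the statement is a classical elementary fact; the only point requiring slight care is the counting step, where one must observe that the number of blocked colors is at most $d_{max}$ while the palette size is \emph{strictly} larger, which is exactly what guarantees feasibility at every step. A formal write-up could equivalently phrase feasibility and propriety together as an induction on the number of colored vertices, but the greedy formulation above already captures the full content of the lemma.
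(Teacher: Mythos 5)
Your proof is correct and complete: the greedy sequential coloring argument is the standard proof of this bound, and your counting step (at most $d_{max}$ colors blocked versus a palette of $d_{max}+1$) together with the propriety check is exactly what is needed. Note that the paper itself offers no proof of this lemma at all --- it is invoked as ``a well-known fact from graph theory'' and used as a black box --- so your write-up supplies the standard justification that the paper omits, and there is nothing in the paper to diverge from.
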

For all instantiations of $\mathcal{P}$-models considered in this paper leading to various structured matrices,  the vertices of associated coherence graphs will turn out to have small degrees and hence, by Lemma \ref{graph_lemma}, small chromatic numbers.

%Graph-theoretic properties of the $\mathcal{P}$-model play important role in proving the quality of the proposed structured models. 
%However for certain examples of $\mathcal{P}$-models where matrices $\textbf{P}_{i}$ are dense and obtained according to the probabilistic 
%procedure one can obtain strong guarantees under even weaker assumptions. We get back to it later.

We will introduce one more structural parameter of the $\mathcal{P}$-model, depending on whether it is specified deterministically or randomly.
\begin{definition}
The uni-coherence $\tilde{\mu}[\mathcal{P}]$ of the $\mathcal{P}$-model is defined as follows. 
If matrices $\vv{P}_{i}$ are constructed deterministically then
$\tilde{\mu}[\mathcal{P}] = \max_{1 \leq i < j \leq m} \sum_{n_{1}=1}^{n}|\vv{P}^{T}_{i,n_{1}}\vv{P}_{j,n_{1}}|.$
If the matrices that specify $\mathcal{P}$ are constructed randomly, then we take
$\tilde{\mu}[\mathcal{P}] = \max_{1 \leq i < j \leq m} \mathbb{E}[|\sum_{n_{1}=1}^{n}\vv{P}^{T}_{i,n_{1}}\vv{P}_{j,n_{1}}|].$
\end{definition}

It turns out that the sublinearity in $n$ of uni-coherence $\tilde{\mu}[\mathcal{P}]$ helps to establish strong theoretical results
regarding the quality of the $\mathcal{P}$-model.

\subsection{Examples of $\mathcal{P}$-model structured matrices}
\label{sec:examples}
Below we observe that various structured random matrices can be constructed according to the $\mathcal{P}$-model, i.e. by specifying a sequence of matrices $\vv{P}_i$ in Eqn.~\ref{eq:pmodel}. We note that chromatic numbers and coherence values of these $\P$-models are low. In the next section, we show that this implies that we can get unbiased, low-variance kernel approximations from these matrices, for various choices of nonlinearities. Here we consider square structured matrices for which $m = n$, or rectangular matrices with $m < n$ obtained by selecting first $m$ rows of a structured matrix. 

\subsubsection{Circulant matrices}
Circulant matrices can be constructed via the $\mathcal{P}$-model with budget of randomness $t=n$
and matrices $\{\vv{P}_{i}\}_{i=1}^m$ of entries in $\{0,1\}$.  See Fig. \ref{fig:cycle} for an illustrative construction.
The coherence of the related $\mathcal{P}$-model trivially satisfies: $\mu[\mathcal{P}] = O(1)$ and $\tilde{\mu}[\mathcal{P}] = 0$.
The coherence graphs are vertex disjoint cycles. Since each cycle can be colored with at most $3$ colors, the chromatic number of the 
$\mathcal{P}$-model satisfies: $\chi[\mathcal{P}] \leq 3$.

\begin{figure}[h]
\vspace{-0.05in}
\vspace{-0.1in}
\centering
\includegraphics[height=6cm, width = 3.35in]{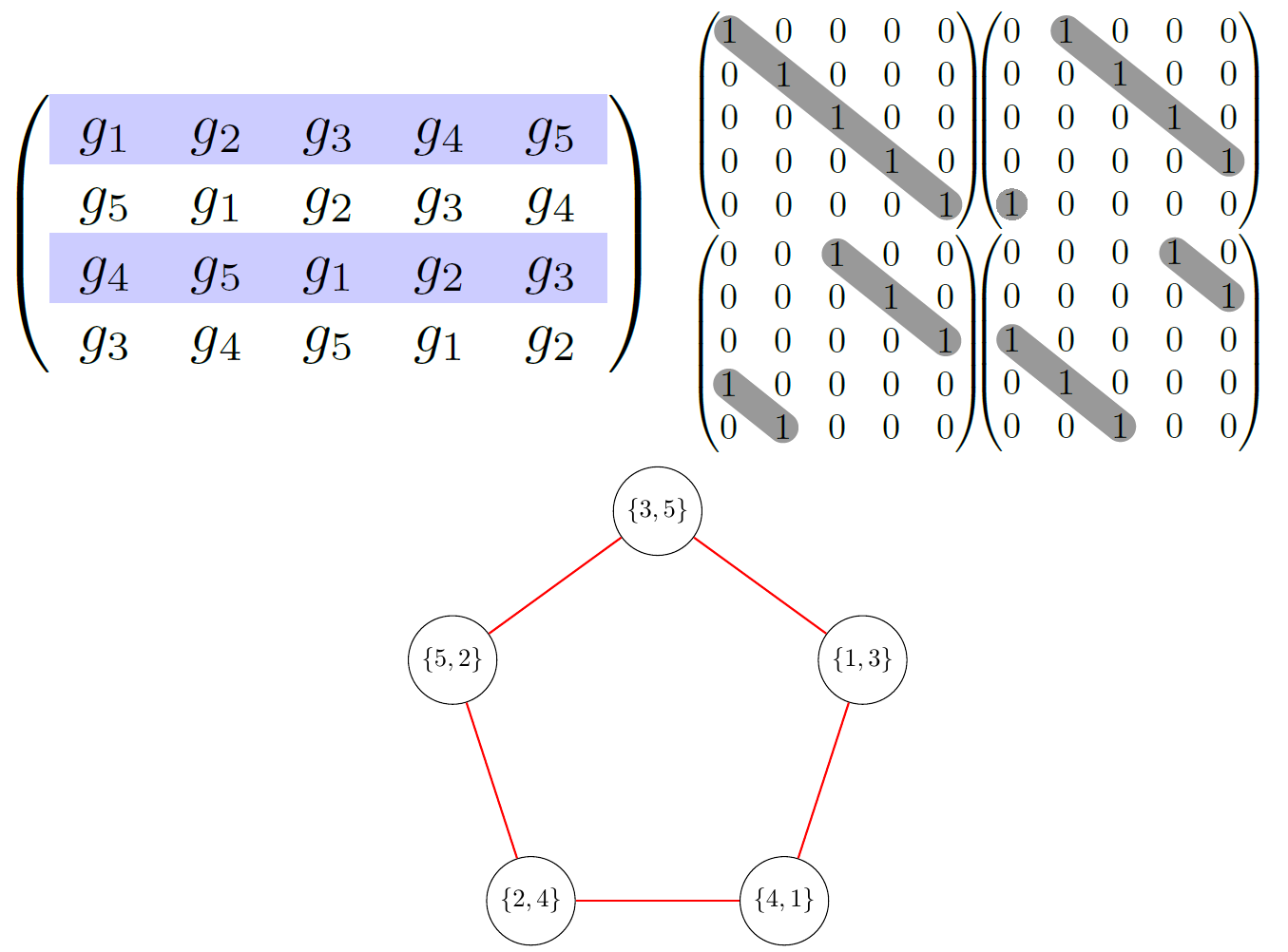}
\vspace{-0.1in}
\caption{Top left: Circulant gaussian matrix $\mathcal{C}$. Top right: matrices $\vv{P}_{1},\vv{P}_{2},\vv{P}_{3},\vv{P}_{4}$ from
the $\mathcal{P}$-model generating $\mathcal{C}$ from the ``budget of randomness'' $(g_{1},...,g_{5})$.
Bottom: Graph $\mathcal{G}_{i_{1},i_{2}}$ corresponding to two highlighted rows of $\mathcal{C}$.
Graphs obtained from circulant matrices are collections of cycles thus their chromatic number is at most $3$.\\}
\label{fig:cycle}
\vspace{-0.32in}
\end{figure}

\subsubsection{Toeplitz and Hankel matrices}
The associated $\mathcal{P}$-models are obtained in a similar way as for circulant matrices, in particular each column of each $\vv{P}_{i}$
is a binary vector. The corresponding coherence graphs have vertices of degrees at most $2$ and thus the chromatic number $\chi[\mathcal{P}]$
is at most $3$. As for the previous case, coherence $\mu[\mathcal{P}]$ is of the order $O(1)$ and $\tilde{\mu}[\mathcal{P}] = 0$.

\subsubsection{Fastfood matrices}
The Fastfood~\cite{LSS13} approach is a very special case of the $\mathcal{P}$-model.
Note that the core term in the Fastfood transform, Eqn.~\ref{eq:fastfood}, is the structured matrix $\textbf{H}\textbf{G}$, where $\textbf{H}=\{{h}_{i,j}\}$ is Hadamard and $\textbf{G}$ 
is a random diagonal gaussian matrix  (the rightmost terms $\vv{H}\vv{B}$ in Eqn.~\ref{eq:fastfood} implement data preprocessing 
to make all datapoints dense, and normalization is implemented by the leftmost scaling matrix $\vv{S}$). The matrix $\textbf{H}\textbf{G}$ can be constructed via the $\mathcal{P}$-model
with the fixed budget of randomness $\vv{g}=(g_{1},...,g_{n})$ and using the sequence of matrices 
$\mathcal{P}=(\textbf{P}_{1},...,\textbf{P}_{n})$, where each $\textbf{P}_{i}$ is a random diagonal matrix with
entries on the diagonal of the form: $h_{i,1},...,h_{i,n}$.
The quality of the FastFood approach can be now explained in the general $\mathcal{P}$-model method framework. One can easily see that the graphs  related to the model are empty (since $\vv{P}^{T}_{i,n_{1}} \vv{P}_{j,n_{2}} = 0$ for $n_{1} \neq n_{2}$).
The sublinearity of $\tilde{\mu}[\mathcal{P}]$ comes from the fact that  with high probability any two rows of $\vv{H}\vv{G}$ are close to be orthogonal.

\subsubsection{Toeplitz-like semi-Gaussian matrices}
\label{subsec:semi}

Consider Toeplitz-like matrices expressible by Eqn.~\ref{eq:circ_skewcirc} with displacement rank $r$. 
We will assume that $\vv{g}^{1},...,\vv{g}^{r} \in \mathbb{R}^{n}$ defining the Circulant-components in Eqn.~\ref{eq:circ_skewcirc} are independent Gaussian vectors. They will serve as a
``budget of randomness'' in the related $\mathcal{P}$-model that we are about to describe, with $r$ allowing a tunable tradeoff between structure and randomness. The vectors $\vv{h}^{1},...,\vv{h}^{r}$defining the skew-Circulant components in Eqn.~\ref{eq:circ_skewcirc} can be defined in different ways.  Below we present two general schemes:

\textbf{Random discretized vectors $\vv{h}^{i}$:} Each dimension of each $\vv{h}^{i}$ is chosen independently at random from the 
binary set $\{-\frac{1}{\sqrt{nr}},\frac{1}{\sqrt{nr}}\}$.

\textbf{Sparse setting:} Each $\vv{h}^{i}$ is sparse (but nonzero), i.e. has only few nonzero entries.
%NOTE
Furthermore, the sign of each $\vv{h}^{i}_{j}$ is chosen independently at random and
the following holds: $\|\vv{h}^{1}\|^{2}+...+\|\vv{h}^{r}\|^{2}=1$. 
%NOTE
This setting is characterized by a parameter $\kappa$ defining the 
size of the set of dimensions that are nonzero for at least one
$\vv{h}^{i}$.

%The semi-gaussian matrix is obtained from a matrix defined by Eqn.~\ref{eq:circ_skewcirc} by taking its first $m$ columns,
%where $g^{i}s$ and $h^{j}s$ are as above.
 
We refer to such matrices as Toeplitz-like semi-Gaussian matrices. We now sketch how they can be obtained from the $\mathcal{P}$-model.
We take $t=nr$ and $\vv{g} = (g^{1}_{1},...,g^{1}_{n},...,g^{r}_{1},...,g^{r}_{n})^{T}$.
The matrix $\vv{P}_{1}$ is constructed by vertically stacking $r$ matrices $\vv{S}_{j}$ for $j=1,...,r$,
where each $\vv{S}_{j}$ is constructed as follows. The first column of $\vv{S}_{j}$ is $\vv{h}^{j}$ and the
subsequent columns are obtained from previous by skew-Circulant downward shifts.
Matrix $\vv{P}_{i}$ for $i>1$ is obtained from $\vv{P}_{i-1}$ by upward Circulant shifts, independently for each column at each block $\vv{S}_{j}$.

Matrices constructed according to this procedure satisfy conditions regarding certain structural parameters of the $\mathcal{P}$-model (see: Theorem \ref{ldrm-extra}).
In particular, in the sparse semi-Gaussian setting the corresponding coherence graphs have vertices of degrees bounded by a constant; thus,
by Lemma \ref{graph_lemma} the $\mathcal{P}$-models associated with them have low chromatic numbers.

%Denote the directed signed cyclic decomposition graph (as on Fig.1) corresponding to $A$ as $G_{A}$.
%Note that the budget of randomness is controlled by the rank $r$ and is given by the $nr$-dimensional vector of the form: $g=(g^{1}_{0},...,g^{1}_{n-1},...,g^{r}_{0},...,g^{r}_{n-1})$, where $g^{i}=(g^{i}_{0},...,g^{i}_{n-1})$
%for $i=1,...,r$. 
%Matrix $P_{i}$ consists of $r$ blocks: $P_{i}^{1},...,P_{i}^{r} \in \mathbb{R}^{n \times n}$
%that are put together vertically. Each $P_{i}^{j}$ is of the form: $P_{i}^{\prime} \times krylov(B^{T},h^{j})^{T}$,
%where $P_{i}^{\prime}$ is a matrix with only one nonzero entry per column/row and taken from the set $\{-1,1\}$. The unique element of the  
%$l^{th}$ column ($l=0,...,n-1$) of $P_{i}^{\prime}$ is the sign of the last edge of the unique directed $l$-edge path starting at node $i$ in $G_{A}$ (if $l=0$ then the value is $+1$). The index of the row this element belongs to is the id of the last vertex of the aforementioned $l$-edge path.
%Matrices $P_{i}$ defined in this way trivially satisfy the normalization condition under our choice of vectors $h^{1},...,h^{r}$ since $B$ is a general cyclic matrix thus it effectively shuffles dimensions of the vectors it acts on (possibly swapping their signs). 

\subsection{Construction of Random Feature Maps}
\label{sec:alg}

Given $S[\P]$, the $m\times n$ structured random matrix defined by a $\P$-model,  in lieu of using the $k \times n$
Gaussian random matrix $\vv{M}$ in Eqn.~\ref{eq:featuremap},  the feature map for a data vector $\vv{x}$ is constructed as follows.
\begin{compactitem}
\item Preprocessing phase: Compute $\vv{x}' = D_{1}HD_{0}\vv{x}$, where $H \in \mathbb{R}^{n \times n}$ is a $l_{2}$-normalized Hadamard matrix and  $D_{0},D_{1} \in \{-1,+1\}^{n \times n}$ are independent random diagonal matrices. Note that this transformation does not change the values of Gaussian or Arc-cosine kernels, since they are spherically-invariant. This preprocessing densifies the input data vector.
\item Compute $\vv{x}'' = S[\P] \vv{x} \in \reals^m$.
\item Compute $\vv{\bar{x}} \in \reals^k$ by concatenating random instantiations of the vector $\vv{x}''$ above obtained from $k/m$ independent constructions of $S[\P]$. 
\item Return $ \Psi(\vv{x}) = \frac{1}{\sqrt{k}} s(\vv{\bar{x}})$
\end{compactitem}
\iffalse
\textbf{Step 1}
First we preprocess $\mathcal{X}$ by passing it over a linear transformation $D_{1}HD_{0}$, where $H \in \mathbb{R}^{n \times n}$ is a $l_{2}$-normalized Hadamard matrix and 
$D_{0},D_{1} \in \{-1,+1\}^{n \times n}$ are independent random diagonal matrices
%such that each element of the diagonal is $-1$ with probability $\frac{1}{2}$ and $+1$ otherwise. 
Note that this transformation does not change the values of our kernels, since they are spherically-invariant.

\textbf{Step 2}
Instead of computing $f(\textbf{Gx})f(\textbf{Gy})$ in the preprocessed dataset, one computes the ``structured version'' of that product,
which is: $f(\textbf{Ax})f(\textbf{Ay})$, where $\textbf{A}$ is the structured version of $\textbf{G}$.
Matrix $\textbf{A} \in \mathbb{R}^{k \times n}$ is build of $\frac{k}{m}$ blocks stacked vertically.
Each block (for $i=1,...,\frac{k}{m}$) is a structured matrix $G_{struct}^{i} \in \mathbb{R}^{m \times n}$
obtained with the use of the $\mathcal{P}$-model and defined as $G_{struct}^{i} = \vv{S}[\mathcal{P}]$ (see: Def. \ref{def:p_model}).
Different structured matrices $G_{struct}^{i}$ are constructed independently.

The preferable $\mathcal{P}$-models for the above algorithm are those for which $\chi[\mathcal{P}],\mu[\mathcal{P}] = O(1)$
and $\tilde{\mu}[\mathcal{P}] = o(\frac{n}{\log^{2}(n)})$, as it will become clear in section~\ref{sec:theory}. All the structured matrices discussed in this paper satisfy these conditions.
\fi 

Note that the displacement rank $r$ for low displacement rank matrices and the  number of rows $m$ of a single structured block can be used to control the ``budget of randomness";  $m=1$ reduces to a completely unstructured matrix.

\section{Theoretical results}
\label{sec:theory}

In this section we provide concentration results regarding $\mathcal{P}$-model for Gaussian and arc-cosine kernels, 
showing in particular that the variance of the computed structured approximation of the kernel
is close to the unstructured one. We also present results targeting specifically low displacement rank structured matrices, and show how the displacement rank knob can be used to increase the budget of randomness and reduce the variance.

Let us denote by $\tilde{\K}_{\mathcal{P}}(\textbf{x},\textbf{z})$ the approximation of the kernel for two vectors
$\vv{x},\vv{z} \in \mathbb{R}^{n}$ if the $\mathcal{P}$-model is used. 
By $\tilde{\K}_{\vv{G}}(\textbf{x},\textbf{z})$ we denote the approximation of the kernel for two vectors
$\vv{x},\vv{z} \in \mathbb{R}^{n}$ if the fully unstructured setting with truly random Gaussian matrix $\vv{G}$ is applied.
All the proofs are in the Appendix. We start with the following result.
\begin{lemma}[Unbiasedness of the $\mathcal{P}$-model]
\label{unbiasedness_lemma}
Presented $\mathcal{P}$-model mechanism gives an unbiased estimation of the Gaussian and $b^{th}$-order arc-cosine kernels for $b \in \{0,1\}$
if for every $\vv{P}_{i}$ any two different columns $\vv{P}_{i,j}$,$\vv{P}_{i,k}$ of $\vv{P}_{i}$ satisfy $\vv{P}^{T}_{i,j}\vv{P}_{i,k}=0$. 
Thus,  $\mathbb{E}[\tilde{\K}_{\mathcal{P}}(\textbf{x},\textbf{z})] = \K(\vv{x},\vv{z}).$
\end{lemma}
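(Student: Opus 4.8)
The plan is to reduce unbiasedness to a single-row statement and then exploit a clean linear-algebra identity that turns each row of $\vv{S}[\P]$ into a genuine standard Gaussian projection. First I would observe that, by the construction in Section~\ref{sec:alg}, the estimator $\tilde{\K}_{\P}(\vv{x},\vv{z})$ is an average over the $m$ rows of a single structured block and over the $k/m$ i.i.d.\ block copies of per-row terms of the form $s(\vv{g}^{T}\vv{P}_{i}\vv{x})\,\overline{s(\vv{g}^{T}\vv{P}_{i}\vv{z})}$, the product/conjugation convention being dictated by the kernel. Since the block copies are identically distributed and expectation is linear, it suffices to show that a single per-row term has expectation $\K(\vv{x},\vv{z})$. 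I would also note at the outset that the preprocessing map $D_{1}HD_{0}$ is orthogonal and that the Gaussian and arc-cosine kernels are spherically invariant, so preprocessing preserves norms and inner products, hence the kernel value; it can therefore be folded away and ignored in the expectation computation.

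The key step is the following observation. For a fixed row $i$, set $\vv{w}_{i} := \vv{P}_{i}^{T}\vv{g}\in\reals^{n}$, so that $\vv{g}^{T}\vv{P}_{i}\vv{x} = \vv{w}_{i}^{T}\vv{x}$ and $\vv{g}^{T}\vv{P}_{i}\vv{z} = \vv{w}_{i}^{T}\vv{z}$. Because $\vv{g}\sim\mathcal{N}(0,\vv{I}_{t})$, the vector $\vv{w}_{i}$ is centered Gaussian with covariance $\Expectation[\vv{P}_{i}^{T}\vv{g}\vv{g}^{T}\vv{P}_{i}] = \vv{P}_{i}^{T}\vv{P}_{i}$. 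Now I would invoke the two hypotheses simultaneously: the $\P$-model definition forces the columns of $\vv{P}_{i}$ to have unit $\ell_{2}$ norm, while the assumption of the lemma forces distinct columns to be orthogonal; together these give $\vv{P}_{i}^{T}\vv{P}_{i} = \vv{I}_{n}$, and hence $\vv{w}_{i}\sim\mathcal{N}(0,\vv{I}_{n})$. In other words, the effective projection vector attached to each row is marginally an exact standard Gaussian vector, precisely the distribution used in the unstructured random-feature construction.

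With this in hand, the conclusion follows from the integral representations recalled in Section~2. For the Gaussian kernel I would take $s(x)=e^{ix/\sigma}$, so the per-row term equals $e^{i\vv{w}_{i}^{T}(\vv{x}-\vv{z})/\sigma}$, whose expectation over the standard Gaussian $\vv{w}_{i}$ is exactly $\K(\vv{x},\vv{z})$ by Bochner's Theorem. For the $b^{th}$-order arc-cosine kernels with $b\in\{0,1\}$ I would take $s=i(\cdot)$ and $s=\max(\cdot,0)$ respectively; since the defining integrals $\K_{b}(\vv{x},\vv{z})=\Expectation_{\vv{w}\sim\mathcal{N}(0,\vv{I})}[\,i(\vv{w}^{T}\vv{x})\,i(\vv{w}^{T}\vv{z})\,(\vv{w}^{T}\vv{x})^{b}(\vv{w}^{T}\vv{z})^{b}\,]$ are taken against the standard Gaussian density, the single-row expectation reproduces them verbatim. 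Averaging over rows and over the i.i.d.\ block copies then preserves the value by linearity, giving $\Expectation[\tilde{\K}_{\P}(\vv{x},\vv{z})]=\K(\vv{x},\vv{z})$.

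The main thing to get right---rather than a genuine obstacle---is recognizing that unbiasedness needs only the \emph{marginal} law of each row, so the inter-row correlations controlled by $\mu[\P]$ and $\chi[\P]$ play no role here and are deferred entirely to the variance analysis; the sole substantive content is the identity $\vv{P}_{i}^{T}\vv{P}_{i}=\vv{I}_{n}$. The only care required is bookkeeping the conjugation/product convention of the nonlinearity so that each single-row expectation matches the corresponding kernel's integral representation; once $\vv{w}_{i}$ is shown to be standard Gaussian this matching is automatic and no concentration argument is needed.
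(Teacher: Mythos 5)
Your proposal is correct and follows essentially the same route as the paper's own proof: both reduce unbiasedness to a single-row statement via linearity of expectation, and both show that each row $\vv{P}_{i}^{T}\vv{g}$ is an exact standard Gaussian vector because unit-norm columns give $\mathcal{N}(0,1)$ marginals and orthogonal columns give independence (your covariance identity $\vv{P}_{i}^{T}\vv{P}_{i}=\vv{I}_{n}$ is just a compact restatement of that coordinatewise argument). The only cosmetic difference is that you match the per-row expectation to the kernel's integral representation directly, whereas the paper matches it to the unstructured Gaussian estimator's expectation; these are equivalent.
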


%NOTE
The orthogonality condition $\vv{P}^{T}_{i,j}\vv{P}_{i,k}=0$ is trivially satisfied by Hankel, circulant or Toeplitz structured matrices produced by the $\mathcal{P}$-model
as well as Toeplitz-like semi-Gaussian matrices, where each $\vv{h}^{i}$ has one nonzero entry.
It is also satisfied in expectation (which in practice suffices) for all presented Toeplitz-like semi-Gaussian matrices.
%NOTE

For a $\mathcal{P}$-model, where matrices $\vv{P}_{i}$ were chosen randomly we denote as $\eta[\mathcal{P}]$ the maximum possible value that 
a random variable  $(\vv{P}_{i,n_{1}}^{T}\vv{P}_{j,n_{1}})^{2}$ can take for $1 \leq i < j \leq m, 1 \leq n_{1} \leq n$.
Without loss of generality we will assume that data vectors are drawn from the ball $\mathcal{B}(0,1)$ centered at $0$ of unit $l_{2}$ norm. Below we state results regarding $d^{th}$ moments of the obtained kernel's approximation via the $\mathcal{P}$-model
that lead to the concentration results.

\begin{theorem}
\label{main_theorem_1}
Let $\vv{x},\vv{z} \in \mathcal{B}(0,1)$ and let $d \in \mathbb{N}$.
Assume that each structured block of a matrix $\vv{A}$ (see: Section \ref{sec:alg}) produced according to the $\mathcal{P}$-model has $m$ rows
and $\tilde{\mu}[\mathcal{P}] = o(\frac{n}{\log^{2}(n)})$.
If matrices $\vv{P}_{i}$ of the $\mathcal{P}$-model are chosen randomly then assume furthermore that for 
any $1 \leq i < j \leq m$ and $1 \leq n_{1} < n_{2} \leq n$ the $n_{1}^{th}$ 
column of $\vv{P}_{i}$ is chosen independently from the $n_{2}^{th}$ column of $\vv{P}_{j}$.
If matrices $\vv{P}_{i}$ are chosen deterministically then for any $T, \epsilon > 0$ the following is true for $n$ large enough:
$$|\mathbb{E}[\tilde{\K}_{\mathcal{P}}^{d}(\textbf{x},\textbf{z})]-\mathbb{E}[\tilde{\K}_{\vv{G}}^{d}(\textbf{x},\textbf{z})]| \leq O(p_{gen}(T) + p_{struct}(T) + d \epsilon),$$
where:
\begin{equation}
% 2\sqrt{\frac{m}{\pi}}e^{-\frac{mk}{2}}
p_{gen}(T) = \frac{4d}{\sqrt{2 \pi T}} e^{-\frac{T}{2}} + 4ne^{-\frac{\log^{2}(n)}{8}},
\end{equation}
\begin{align}
\begin{split}
p_{struct}(T) =  4\sum_{i=1}^{m}\chi(i,i)e^{-\frac{1}{8\mu^{2}[\mathcal{P}]\chi^{2}[\mathcal{P}]}\frac{n}{\log^{6}(n)}}\\
+2\sum_{1 \leq i \leq j \leq m}\chi(i,j)e^{-\frac{\epsilon^{2}\sqrt{n}}{8\mu^{2}[\mathcal{P}]\chi^{2}[\mathcal{P}]T\log^{4}(n)}}\\
\end{split}
\end{align}
and expectations are taken in respect to random choice for a Gaussian vector $\vv{g}$.
If $\vv{P}_{i}s$ are chosen from the probabilistic model then the above holds with probability at least 
$1-p_{wrong}$ in respect to random choices of $\vv{P}_{i}s$, where 
$$p_{wrong} = 2\sum_{i \leq i < j \leq m}e^{-\frac{n}{8\log^{6}(n)\eta[\mathcal{P}]}}.$$
\end{theorem}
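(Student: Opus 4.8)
The plan is to write each $d$-th moment as an average over $d$-tuples of row indices and to reduce the comparison to the covariance structure of a single Gaussian vector. Condition on the matrices $\vv{P}_i$ and set $u_i = \vv{g}^T\vv{P}_i\vv{x}'$ and $v_i = \vv{g}^T\vv{P}_i\vv{z}'$, where $\vv{x}',\vv{z}'$ denote the preprocessed (densified) data. Since $\vv{g}$ is Gaussian, the vector $(u_1,v_1,\dots,u_m,v_m)$ is jointly Gaussian and
\[
\mathbb{E}[\tilde{\K}_{\mathcal{P}}^d(\vv{x},\vv{z})] = \frac{1}{m^d}\sum_{i_1,\dots,i_d}\mathbb{E}\Big[\textstyle\prod_{\ell=1}^d s(u_{i_\ell})\overline{s(v_{i_\ell})}\Big],
\]
with the analogous expression for $\tilde{\K}_{\vv{G}}$ in terms of independent Gaussian rows $\vv{w}_i$. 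The observation I would establish first is that the within-row $2\times 2$ covariance blocks agree exactly across the two models: by the column-orthogonality assumed in Lemma~\ref{unbiasedness_lemma}, $\mathrm{Var}(u_i)=\|\vv{x}'\|_2^2$, $\mathrm{Var}(v_i)=\|\vv{z}'\|_2^2$, and $\mathrm{Cov}(u_i,v_i)=\vv{x}'^T\vv{z}'=\vv{x}^T\vv{z}$, matching a single Gaussian row (and the norms are preserved by the orthogonal preprocessing). Hence every contribution of a tuple supported on a single row index is identical in the two models, and the entire discrepancy is carried by the cross-row covariances $\langle\vv{P}_i\vv{x}',\vv{P}_j\vv{z}'\rangle$ with $i\neq j$, which vanish for independent Gaussian rows.

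Next I would reduce the moment difference to these cross-covariances by a truncation-plus-interpolation argument. First I truncate each $u_i,v_i$ at level $\sqrt{T}$; since the variances are at most $1$ on $\mathcal{B}(0,1)$, a Gaussian tail estimate together with a union bound over the $d$ factors of the product accounts for the term $\tfrac{4d}{\sqrt{2\pi T}}e^{-T/2}$. On the truncated domain the integrand $\prod_\ell s(u_{i_\ell})\overline{s(v_{i_\ell})}$ is bounded, so I would interpolate the covariance matrix linearly from the $\mathcal{P}$-model covariance to the block-diagonal covariance of the independent Gaussian model and apply a Gaussian-interpolation (Price-type) estimate: the derivative of the expectation along this path is a sum, over pairs of distinct rows, of cross-covariance entries multiplied by second-order terms that are bounded after truncation. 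This yields a bound of order $d\epsilon$ once every cross-covariance is shown to be at most $\epsilon$ in absolute value.

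The heart of the argument is therefore the bound $|\langle\vv{P}_i\vv{x}',\vv{P}_j\vv{z}'\rangle|\le\epsilon$. Expanding, $\langle\vv{P}_i\vv{x}',\vv{P}_j\vv{z}'\rangle=\sum_{n_1,n_2}x'_{n_1}z'_{n_2}\,\vv{P}_{i,n_1}^T\vv{P}_{j,n_2}$, which I split into the diagonal part ($n_1=n_2$) and the off-diagonal part ($n_1\neq n_2$). The preprocessing densifies the data: with probability at least $1-4ne^{-\log^2(n)/8}$ over the random signs $D_0,D_1$ one has $\|\vv{x}'\|_\infty,\|\vv{z}'\|_\infty=O(\log(n)/\sqrt{n})$, which accounts for the second term of $p_{gen}$. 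On this event the diagonal part is at most $\tilde{\mu}[\mathcal{P}]\cdot\|\vv{x}'\|_\infty\|\vv{z}'\|_\infty=O(\tilde{\mu}[\mathcal{P}]\log^2(n)/n)=o(1)$, using the hypothesis $\tilde{\mu}[\mathcal{P}]=o(n/\log^2 n)$. For the off-diagonal part I would use the coherence graph $\mathcal{G}_{i,j}$: color its vertices with $\chi(i,j)$ colors so that within each color class the participating index pairs are vertex-disjoint; the random signs of the densified coordinates are then independent within a class, and a Hoeffding bound with variance proxy governed by $\mu[\mathcal{P}]$ (via $\sum_{n_1<n_2}(\vv{P}_{i,n_1}^T\vv{P}_{j,n_2})^2\le n\mu^2[\mathcal{P}]$) controls each class. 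A union over the $\chi(i,j)$ classes and over all row pairs produces $p_{struct}$, whose first sum (over diagonal blocks, with the stronger exponent $\propto n/\log^6 n$) guarantees that each within-row covariance is accurate to high precision, while its second sum guarantees that every cross-row covariance stays below $\epsilon$.

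Finally, for randomly constructed $\vv{P}_i$ the diagonal sum $\sum_{n_1}\vv{P}_{i,n_1}^T\vv{P}_{j,n_1}$ is itself random; invoking the assumed independence of the $n_1$-th column of $\vv{P}_i$ from the $n_2$-th column of $\vv{P}_j$ and the uniform bound $\eta[\mathcal{P}]$ on $(\vv{P}_{i,n_1}^T\vv{P}_{j,n_1})^2$, a Hoeffding bound over the $n$ independent summands gives concentration and yields the failure probability $p_{wrong}$; on its complement the deterministic analysis applies verbatim. The hard part, I expect, will be making the truncation-then-interpolation step rigorous for the non-smooth and possibly unbounded nonlinearities of interest (the step function for $\K_0$ and ReLU for $\K_1$): the Price-type estimate nominally wants a smooth bounded integrand, so one must either mollify $s$ and control the approximation error, or replace the interpolation by a direct coupling of the two Gaussian laws whose total-variation distance is governed by the small cross-covariances, all while keeping the coloring-based concentration uniform over the $\Theta(m^2)$ row pairs.
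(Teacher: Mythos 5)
Your proposal reproduces the probabilistic and combinatorial core of the paper's argument almost exactly: the densification step giving $\log(n)$-balanced coordinates via Azuma's inequality (accounting for the $4ne^{-\log^{2}(n)/8}$ term), the splitting of each cross-correlation sum into diagonal terms controlled by $\tilde{\mu}[\mathcal{P}]$ and off-diagonal terms controlled by coloring the coherence graph $\mathcal{G}_{i,j}$ into $\chi(i,j)$ classes of independent summands and applying Azuma/Hoeffding with variance proxy $n\mu^{2}[\mathcal{P}]$ (giving $p_{struct}$), and the separate concentration over the randomness of the $\vv{P}_{i}$'s giving $p_{wrong}$. One local correction: the paper does \emph{not} assume exact within-row column orthogonality in this theorem (that hypothesis belongs to Lemma~\ref{unbiasedness_lemma}), which is why the first sum in $p_{struct}$ exists --- in the paper's proof (Theorem~\ref{main_theorem}, Lemma~\ref{technical_lemma}) it controls both the within-row column cross-correlations and the concentration of the squared norms $\|s^{i,j}\|_{2}^{2}$ around $1$; your own final accounting actually uses it this way, contradicting your earlier claim of exact agreement of the within-row blocks.

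Where you genuinely diverge is the step that converts small covariance discrepancies into small moment discrepancies, and that is also where your proof has a real gap. You propose truncation at level $\sqrt{T}$ followed by a Price-type Gaussian covariance interpolation, which requires a smooth (and after truncation, bounded) integrand; as you yourself note, the theorem covers the sign and ReLU nonlinearities of the arc-cosine kernels, for which this step does not apply as stated, and mollification would require uniform control of the mollification error over all $\Theta(m^{2}d)$ terms. The paper avoids this entirely by a coupling argument: it writes the projections of the structured rows onto $\mathrm{span}(v^{1},v^{2})$ as $g \cdot s^{i,j}$, runs Gram--Schmidt on the nearly orthogonal vectors $s^{i,j}$ to obtain exactly orthogonal $w^{i,j}$ (hence exactly independent Gaussians $g\cdot w^{i,j}$), and represents each structured variable as its independent-Gaussian counterpart plus a perturbation whose size is $\kappa\Gamma(2r)\|g^{\mathcal{H}}\|_{2}$ --- this is also why the term $\frac{4d}{\sqrt{2\pi T}}e^{-T/2}$ arises in the paper from bounding $\|g^{\mathcal{H}}\|_{2}$ rather than from truncating the values $u_{i},v_{i}$. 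The effect of the perturbation on a possibly discontinuous nonlinearity is then absorbed by the stability quantity $p_{\lambda,\epsilon}$ (a perturbed Gaussian argument rarely crosses the discontinuity) together with a bad-index counting argument and the $M$-boundedness of $\Psi$. So the ``direct coupling'' you gesture at as a fallback is not an alternative repair but essentially the paper's entire mechanism, and without it (or a fully worked-out distributional/mollified version of the interpolation) your proof establishes the theorem only for the smooth complex-exponential case, not for the arc-cosine kernels the statement is claimed for.
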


Let us comment on the result above. The upper bound is built from two main components: $p_{gen}$ and $p_{struct}$. 
The first one depends on the general parameters of the setting: dimensionality of the data $n$ and order of the computed moment $d$.
The second one is crucial to understand how the structure of the matrix influences the quality of the model. 
We can immediately see that low chromatic numbers $\chi(i,j)$ (see: Section \ref{sec:p_model}) improve quality since they decrease computed upper bound. 
Furthermore, low values of the coherence $\mu[\mathcal{P}]$ and chromatic number $\chi[\mathcal{P}]$ also
lead to stronger concentration results. Both observations were noticed by us before, but now we see how they are implied by general theoretical results.
Finally, for all considered settings, where matrices $\vv{P}_{i}$ are constructed randomly 
parameter $\eta[\mathcal{P}]$ is of order $O(1)$ thus $p_{wrong}$ in negligibly small. 

In particular, if both the chromatic number $\chi[\mathcal{P}]$ and the coherence $\mu[\mathcal{P}]$ are of the order $O(poly(\log(n)))$ then $p_{struct}$
if inversely proportional to the superpolynomial function of $n$ thus is negligible in practice. 
That, as we will see soon, will be the case for proposed Toeplitz-like semi-Gaussian matrices with sparse vectors $h^{i}$. 

Let us also note that Theorem \ref{main_theorem_1} can be straightforwardly applied to the structured matrix from the Fastfood model since the condition 
regarding $\tilde{\mu}[\mathcal{P}]$ is satisfied and so is the independence condition.
Since all the chromatic numbers are equal to zero (because corresponding graphs are empty), $p_{struct} = 0$ and thus the theorem holds.

Theorem \ref{main_theorem_1} implies also that variances of the kernel approximation for the structured $\mathcal{P}$-model case and
unstructured setting are very similar (we borrow denotation from Theorem \ref{main_theorem_1}).

\begin{theorem}
\label{main_variance_theorem}
Consider the setting as in Theorem \ref{main_theorem_1}. If matrices $\vv{P}_{i}$ are chosen deterministically then for any $T, \epsilon > 0$ 
the following is true for $n$ large enough:
%\begin{align}
%\begin{split}
\begin{equation}
|Var(\tilde{\K}_{\mathcal{P}}(\textbf{x},\textbf{z})) -Var(\tilde{\K}_{\vv{G}}(\textbf{x},\textbf{z}))|= O(\frac{m-1}{2k}\Delta),
\end{equation}
%\end{split}
%\end{align}
where $Var$ stands for the variance and $\Delta = p_{gen}(T) + p_{struct} + \epsilon$.
If $\vv{P}_{i}s$ are chosen from the probabilistic model then the above holds with probability at least 
$1-p_{wrong}$, where $p_{wrong}$ is as in Theorem \ref{main_theorem_1}.
\end{theorem}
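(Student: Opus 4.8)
The plan is to reduce the variance comparison to the second-moment comparison already supplied by Theorem~\ref{main_theorem_1}, and then to exploit the block structure of the estimator. Since Lemma~\ref{unbiasedness_lemma} gives $\mathbb{E}[\tilde{\K}_{\mathcal{P}}(\vv{x},\vv{z})] = \K(\vv{x},\vv{z}) = \mathbb{E}[\tilde{\K}_{\vv{G}}(\vv{x},\vv{z})]$, the squared-mean terms cancel and we get the exact reduction $\mathrm{Var}(\tilde{\K}_{\mathcal{P}}) - \mathrm{Var}(\tilde{\K}_{\vv{G}}) = \mathbb{E}[\tilde{\K}_{\mathcal{P}}^{2}] - \mathbb{E}[\tilde{\K}_{\vv{G}}^{2}]$. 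First I would write the estimator explicitly as $\tilde{\K}_{\mathcal{P}} = \frac{1}{k}\sum_{b=1}^{k/m}\sum_{i=1}^{m} Y_{b,i}$, where $Y_{b,i} = s(\vv{g}_{b}^{T}\vv{P}_{i}\vv{x})\,\overline{s(\vv{g}_{b}^{T}\vv{P}_{i}\vv{z})}$ is the contribution of row $i$ in block $b$, the $k/m$ Gaussian draws $\vv{g}_{b}$ are independent across blocks, and the orthogonality condition $\vv{P}^{T}_{i,j}\vv{P}_{i,k}=0$ together with the unit-norm columns of the $\mathcal{P}$-model guarantees that each $Y_{b,i}$ has exactly the marginal law of a single fully-Gaussian random feature.

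Next I would expand $\mathbb{E}[\tilde{\K}_{\mathcal{P}}^{2}] = \frac{1}{k^{2}}\sum \mathbb{E}[Y_{b,i}Y_{b',i'}]$ and compare it term-by-term with the fully independent Gaussian case. The diagonal terms coincide because the marginals agree; and every cross term between two \emph{distinct} blocks also coincides, since independent blocks and independent Gaussian features both factor as $\mathbb{E}[Y]\mathbb{E}[Y']=\K^{2}$ by unbiasedness. The only surviving discrepancy comes from pairs of distinct rows inside one block, giving the exact identity $\mathrm{Var}(\tilde{\K}_{\mathcal{P}}) - \mathrm{Var}(\tilde{\K}_{\vv{G}}) = \frac{1}{km}\sum_{i\neq i'}\mathrm{Cov}(Y_{i},Y_{i'})$, where the sum runs over the $m(m-1)$ ordered pairs of distinct rows within a single block.

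It then remains to bound each within-block covariance. I would apply Theorem~\ref{main_theorem_1} with $d=2$ to the two-row sub-$\mathcal{P}$-model consisting only of rows $i$ and $i'$: its estimator is $\frac{1}{2}(Y_{i}+Y_{i'})$, whose second-moment discrepancy from the two-independent-feature Gaussian case equals exactly $\frac{1}{2}\mathrm{Cov}(Y_{i},Y_{i'})$, and whose structural constants $\mu,\chi,\tilde{\mu}$ are dominated by those of the full model (being maxima over a subset of row-pairs), so the hypotheses are inherited. Theorem~\ref{main_theorem_1} then yields $|\mathrm{Cov}(Y_{i},Y_{i'})| = O(\Delta)$ with $\Delta = p_{gen}(T)+p_{struct}+\epsilon$. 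Summing the $m(m-1)$ ordered pairs and using the $\frac{1}{km}$ normalization collapses the count to $O(\frac{m-1}{k}\Delta)$, which is the claimed $O(\frac{m-1}{2k}\Delta)$ up to the absorbed constant. For randomly specified $\vv{P}_{i}$, each invocation of Theorem~\ref{main_theorem_1} holds on the same high-probability event, so the final bound holds with probability at least $1-p_{wrong}$.

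The main obstacle is the passage from the aggregate second-moment estimate of Theorem~\ref{main_theorem_1} to a bound on an \emph{individual} pairwise covariance: the variance-gap identity is elementary once the block decomposition is in place, but the per-pair bound relies on the observation that a two-row restriction is itself a legitimate $\mathcal{P}$-model inheriting all structural constants. I would therefore verify carefully that restricting to rows $i,i'$ preserves the orthogonality/unbiasedness property, the hypothesis $\tilde{\mu}[\mathcal{P}] = o(\frac{n}{\log^{2}(n)})$, and the independence condition, and that the sub-model's $p_{struct}$ is bounded by the full $p_{struct}$; the remaining steps are routine bookkeeping.
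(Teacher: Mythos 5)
Your proposal is correct, and its skeleton coincides with the paper's own proof: both write the estimator as a normalized sum over $k/m$ independent blocks of $m$ rows each, observe that the diagonal terms and all cross-block terms match the fully Gaussian case exactly (identical marginals; independence plus unbiasedness), and thereby reduce the variance gap to the within-block pairwise cross-moment discrepancies, which after counting the $\binom{m}{2}$ pairs per block and the $1/k^{2}$ normalization yields the $O\left(\frac{m-1}{2k}\Delta\right)$ factor. The one place you genuinely diverge is the source of the per-pair bound. The paper never restricts to a two-row sub-model; its appendix proves a more general statement (Theorem \ref{main_theorem}) formulated for an arbitrary subset $\mathcal{R}$ of rows and product functionals, so each quantity $|\mathbb{E}[X^{\mathcal{P}}_{1,j_{1}}X^{\mathcal{P}}_{1,j_{2}}]-\mathbb{E}[X^{\vv{G}}_{1,j_{1}}X^{\vv{G}}_{1,j_{2}}]|$ is literally an instance of that theorem with $d=2$, $\mathcal{R}=\{j_{1},j_{2}\}$, $\alpha_{1}=\alpha_{2}=1$, and the final bound $\beta$ is cited directly. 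You instead manufacture the same estimate from the main-text statement of Theorem \ref{main_theorem_1} by applying it to the two-row sub-$\mathcal{P}$-model $\{\vv{P}_{j_{1}},\vv{P}_{j_{2}}\}$, via the identity that the second-moment gap of $\frac{1}{2}(Y_{j_{1}}+Y_{j_{2}})$ equals $\frac{1}{2}\mathrm{Cov}(Y_{j_{1}},Y_{j_{2}})$, together with the (correct) observation that $\mu$, $\chi$, $\tilde{\mu}$, the orthogonality/independence hypotheses, and hence $p_{struct}$, are all inherited under restriction to a subset of rows. Both routes are sound: the paper's is shorter because the pairwise case was already packaged at the right level of generality in its appendix, while yours is self-contained given only the published statement of Theorem \ref{main_theorem_1}, at the price of the hypothesis-inheritance check you flagged; your remark that all invocations in the probabilistic case live on one high-probability event is also consistent with the paper, since its $p_{wrong}$ is already a union bound over all row pairs.
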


Note that in practice it means that the variance in the structured and unstructured setting is similar.
In particular, choosing $\epsilon=O(\frac{1}{m^{2}})$, $T > 7\log(m)$, one can deduce that the variance in the structured setting is of the order $O(\frac{1}{m})$ for $n$ large enough (the well known fact is that the unstructured variance is of the order $O(\frac{1}{m})$).
Note also that as expected, for $m=1$ the structured setting becomes an unstructured one, since each structured block consists of just one row and different blocks are constructed independently.

{\bf Toeplitz-like semi-Gaussian Low-displacement rank matrices}: Note that the structure of a matrix affects only the $p_{struct}$ factor in the  statements above.  Thus, we will focus on the structured parameters of the $\mathcal{P}$-model.
We will show that Toeplitz-like semi-Gaussian matrices can be set up so that the above parameters are of required order. 

\begin{theorem}
\label{ldrm-intro}
Consider Toeplitz-like semi-Gaussian matrices with sparse skew-Circulant factors (as in Subsection \ref{subsec:semi}). Let $\kappa$ 
denote the number of dimensions that are nonzero for at least one $\vv{h}^{i}$. Then for $1 \leq i \leq j \leq m$ we have: $\chi(i,j) \leq \kappa^{2} + 1$.
Furthermore, $\mu[\mathcal{P}] \leq \kappa$ and the bound on 
$|\mathbb{E}[\tilde{\K}_{\mathcal{P}}^{d}(\textbf{x},\textbf{z})]-\mathbb{E}[\tilde{\K}_{\vv{G}}^{d}(\textbf{x},\textbf{z})]|$ derived in Theorem \ref{main_theorem_1}
is valid also here if $r \geq 3log^{5}(n)$ and for $p_{wrong}$ of the order $o(\frac{1}{n})$.
\end{theorem}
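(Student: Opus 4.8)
The plan is to reduce every quantity in the statement to the autocorrelation structure of the sparse skew-Circulant generators $\vv{h}^1,\dots,\vv{h}^r$, and then feed the resulting bounds on $\chi[\mathcal{P}]$ and $\mu[\mathcal{P}]$ into Theorem~\ref{main_theorem_1}. Writing $\vv{g}=(\vv{g}^1,\dots,\vv{g}^r)$ block-wise, the $\ell$-th length-$n$ segment $\vv{a}^{\ell}_{i,n_1}$ of the column $\vv{P}_{i,n_1}$ is a cyclic shift of $\vv{h}^\ell$ (with skew-Circulant sign flips) whose offset depends only on $i-n_1$. Hence, for fixed rows $i,j$,
\[
\vv{P}^{T}_{i,n_1}\vv{P}_{j,n_2}=\sum_{\ell=1}^{r}(\vv{a}^{\ell}_{i,n_1})^{T}\vv{a}^{\ell}_{j,n_2}
\]
is, up to signs, a sum over blocks of autocorrelations $R^{\ell}(s)=\sum_p h^{\ell}_p h^{\ell}_{p-s}$ evaluated at the single shift $s=(j-i)-(n_2-n_1)$; in particular it depends on the column indices only through $d=n_2-n_1$. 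This shift-invariance is the engine of the whole argument.

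First I would bound the chromatic number. Since $R^{\ell}(s)\neq 0$ forces $s\in \mathrm{supp}(\vv{h}^{\ell})-\mathrm{supp}(\vv{h}^{\ell})\subseteq S-S$ with $S=\bigcup_\ell \mathrm{supp}(\vv{h}^{\ell})$ and $|S|=\kappa$, the cross-correlation vanishes unless $d$ is one of at most $\kappa^2$ admissible differences. Thus a fixed column has nonzero correlation with at most $\kappa^2$ columns, which caps the maximum degree of the coherence graph $\mathcal{G}_{i,j}$; by Lemma~\ref{graph_lemma} (equivalently, recognizing $\mathcal{G}_{i,j}$ as the line graph of the correlation graph on column indices and edge-coloring via Vizing's theorem) this yields $\chi(i,j)\le \kappa^2+1$. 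For the coherence I would group the $n_1<n_2$ terms by their common difference $d$ and use that at most $n$ pairs share each $d$, so that $\sum_{n_1<n_2}(\vv{P}^{T}_{i,n_1}\vv{P}_{j,n_2})^{2}\le n\sum_{d}\big(\sum_{\ell}\pm R^{\ell}(s_d)\big)^{2}$. The triangle inequality with $|R^{\ell}(s)|\le\|\vv{h}^{\ell}\|^{2}$ and the normalization $\sum_\ell\|\vv{h}^{\ell}\|^{2}=1$ bound each per-shift term by $1$, and there are at most $\kappa^2$ nonzero shifts, giving $\sum_{n_1<n_2}(\cdots)^2\le \kappa^2 n$ and hence $\mu[\mathcal{P}]\le\kappa$.

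It then remains to verify the hypotheses of Theorem~\ref{main_theorem_1} and track its probabilistic constants. The independence requirement is immediate from the random per-column choice of signs and shifts in the construction of Subsection~\ref{subsec:semi}. Since $|\vv{P}^{T}_{i,n_1}\vv{P}_{j,n_1}|\le\sum_\ell\|\vv{h}^{\ell}\|^{2}=1$, we get $\eta[\mathcal{P}]=O(1)$, whence $p_{wrong}=2\sum_{i<j}e^{-n/(8\log^{6}(n)\eta[\mathcal{P}])}\le 2m^{2}e^{-cn/\log^{6}(n)}=o(1/n)$. For $\tilde{\mu}[\mathcal{P}]=o(n/\log^{2}(n))$ I would observe that the diagonal sum $\sum_{n_1}\vv{P}^{T}_{i,n_1}\vv{P}_{j,n_1}$ is a signed sum of copies of the single autocorrelation $\sum_\ell\pm R^{\ell}(j-i)$, whose summands are zero-mean for $i\neq j$ because the signs of the $\vv{h}^{\ell}$ are independent, so a second-moment estimate makes $\mathbb{E}|\cdot|$ only polylogarithmic. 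Taking $\kappa$ polylogarithmic makes $\chi[\mathcal{P}]\le\kappa^2+1$ and $\mu[\mathcal{P}]\le\kappa$ polylogarithmic as well, and the choice $r\ge 3\log^{5}(n)$ supplies enough budget of randomness that the exponents in $p_{struct}$, which scale like $n/(\mu^{2}[\mathcal{P}]\chi^{2}[\mathcal{P}]\log^{6}(n))$, stay super-polynomially large; the Theorem~\ref{main_theorem_1} bound is therefore inherited verbatim.

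I expect the main obstacle to be the estimate $\tilde{\mu}[\mathcal{P}]=o(n/\log^{2}(n))$: the naive bound on $\sum_{n_1}\vv{P}^{T}_{i,n_1}\vv{P}_{j,n_1}$ is $O(n)$, and only the cancellation forced by the random signs of $\vv{h}^{\ell}$ (and the skew-Circulant wrap-around sign flips) brings it down. Making this rigorous, while simultaneously pinning down the exact degree count behind the $\kappa^2+1$ constant and reconciling the opposing demands that $\kappa$ be small (to control $\chi[\mathcal{P}]$ and $\mu[\mathcal{P}]$) yet $r$ be large (to reduce variance), is the delicate part of the argument.
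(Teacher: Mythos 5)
Your first two claims are handled essentially the way the paper handles them: the support/shift counting argument showing that a fixed column has nonzero dot product with at most $\kappa^{2}$ columns of any other matrix, which simultaneously caps the degree of every coherence graph (so Lemma~\ref{graph_lemma} gives $\chi(i,j)\leq \kappa^{2}+1$) and, combined with $|\vv{P}^{T}_{i,n_{1}}\vv{P}_{j,n_{2}}|\leq 1$ and the count of at most $\kappa^{2}n$ nonzero terms, gives $\mu[\mathcal{P}]\leq\kappa$. That part is correct and matches the paper.

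The gap is in the third part. You assert that the column-independence hypothesis of Theorem~\ref{main_theorem_1} is ``immediate from the random per-column choice of signs and shifts.'' It is not, and in fact it fails: in the construction of Subsection~\ref{subsec:semi} the only randomness lies in the $r$ vectors $\vv{h}^{1},\dots,\vv{h}^{r}$, and every column of every $\vv{P}_{i}$ is a deterministic (skew-)Circulant shift of these same vectors, so distinct columns are strongly dependent (even if the shift offsets were randomized per column, the columns would still share the $\vv{h}^{i}$ randomness). This invalidates your use of the probabilistic branch of Theorem~\ref{main_theorem_1}: plugging $\eta[\mathcal{P}]=O(1)$ into the stated $p_{wrong}$ formula is circular, because the derivation of that formula (in the appendix proof) applies Azuma's inequality to the diagonal sum under exactly the independence you are assuming. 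The paper's proof explicitly flags this failure and proceeds differently: it conditions on the draw of the $\vv{h}^{i}$'s and shows that the realized $\vv{P}_{i}$'s satisfy the \emph{deterministic}-branch hypothesis with probability $1-o(\frac{1}{n})$. Concretely, it writes $\vv{P}_{i,n_{1}}^{T}\vv{P}_{j,n_{1}}$ as a sum of dependent terms, decouples them into independent groups (the same chromatic-number trick used in Theorem~\ref{main_theorem_1}), applies Azuma per group to get $\mathbb{P}[|\vv{P}_{i,n_{1}}^{T}\vv{P}_{j,n_{1}}|>c]\leq e^{-\Omega(rc^{2})}$, and then takes $c=\frac{1}{\log^{2}(n)}$, $r\geq 3\log^{5}(n)$ and a union bound over column pairs to conclude $\lambda(i,j)=o(\frac{n}{\log^{2}(n)})$ with probability $1-o(\frac{1}{n})$. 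This is precisely where the hypothesis $r\geq 3\log^{5}(n)$ and the weak $o(\frac{1}{n})$ form of $p_{wrong}$ in the statement come from; your write-up instead attributes the role of $r$ to keeping the exponents of $p_{struct}$ large, which is a misattribution, since in this theorem those exponents are controlled by the deterministic bounds $\mu[\mathcal{P}]\leq\kappa$ and $\chi[\mathcal{P}]\leq\kappa^{2}+1$. Your second-moment estimate of $\tilde{\mu}[\mathcal{P}]$ does exploit the right sign cancellations, but it only bounds an expectation: for the sum of $n$ fully dependent copies of the block autocorrelation, a first- or second-moment Markov bound cannot produce a $1-o(\frac{1}{n})$ guarantee, so the $r$-dependent Azuma concentration step cannot be bypassed.
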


The richness of the low displacement rank mechanism comes from the fact that the budget of randomness can be controlled by the rank parameter $r$ and  increasing $r$ leads to better quality approximations. In particular, we have:

\begin{theorem}
\label{ldrm-extra}
Consider Toeplitz-like semi-Gaussian matrices with sparse skew-Circulant factors and parameter $\kappa$. Assume that each $\vv{h}^{i}$ has exactly $\alpha$ nonzero dimensions, 
each nonzero dimensions taken independently at random from $\{-\frac{1}{\alpha r}, \frac{1}{\alpha r}\}$. 
Then, 
$\mathbb{P}[|\mu[\mathcal{P}]| > \tau] \leq 4n^{2}e^{-\frac{\tau^{2}\alpha r}{O(\kappa^{2})}}.$
\end{theorem}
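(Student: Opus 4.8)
The plan is to prove a pointwise concentration bound for a single cross-correlation $\vv{P}^{T}_{i,n_{1}}\vv{P}_{j,n_{2}}$ and then lift it to the coherence $\mu[\mathcal{P}]$ by a union bound, using the sparse-overlap structure to reduce the aggregate statistic defining $\mu[\mathcal{P}]$ to a maximum over individual correlations.

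First I would fix a pair of rows $1\le i\le j\le m$ and a pair of distinct column indices $n_{1}\ne n_{2}$, and analyze the scalar $X=\vv{P}^{T}_{i,n_{1}}\vv{P}_{j,n_{2}}$. Because both $\vv{P}_{i}$ and $\vv{P}_{j}$ are built by vertically stacking the same $r$ skew-Circulant blocks $\vv{S}_{1},\dots,\vv{S}_{r}$ (one per vector $\vv{h}^{j'}$), the inner product splits as $X=\sum_{j'=1}^{r}Y_{j'}$, where $Y_{j'}$ depends only on $\vv{h}^{j'}$ and equals the inner product of two cyclically shifted, signed copies of $\vv{h}^{j'}$. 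Since the $\vv{h}^{j'}$ are drawn independently across $j'$, the $Y_{j'}$ are independent; and since $n_{1}\ne n_{2}$ forces the two shifts inside each block to differ, each $Y_{j'}$ is a signed sum of products $\pm h^{j'}_{p}h^{j'}_{q}$ over \emph{distinct} index pairs $\{p,q\}$, hence has mean zero. Using that each $\vv{h}^{j'}$ has exactly $\alpha$ nonzero entries, each of magnitude $\Theta(1/\sqrt{\alpha r})$ (the size forced by the normalization $\|\vv{h}^{1}\|^{2}+\dots+\|\vv{h}^{r}\|^{2}=1$), the overlap of two shifts contributes at most $\alpha$ terms, which gives the deterministic range bound $|Y_{j'}|\le \tfrac{1}{r}$; moreover, since the sign-products over distinct pairs are pairwise uncorrelated, $\mathrm{Var}(Y_{j'})\le \tfrac{1}{\alpha r^{2}}$, so $\sum_{j'}\mathrm{Var}(Y_{j'})\le \tfrac{1}{\alpha r}$.

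Next I would apply a Bernstein-type inequality to the independent, mean-zero, bounded summands $Y_{j'}$, obtaining $\mathbb{P}[|X|>s]\le 2\exp(-c\,s^{2}\alpha r)$ in the variance-dominated regime. To pass from a single $X$ to $\mu[\mathcal{P}]$, recall $\mu[\mathcal{P}]^{2}=\max_{i\le j}\tfrac{1}{n}\sum_{n_{1}<n_{2}}X_{n_{1},n_{2}}^{2}$. Here the sparsity is essential: by the structural analysis behind Theorem \ref{ldrm-intro} (the coherence graphs $\mathcal{G}_{i,j}$ have degree $\le\kappa^{2}$ and $\mu[\mathcal{P}]\le\kappa$ deterministically), for a fixed row pair only $O(\kappa^{2}n)$ of the correlations $X_{n_{1},n_{2}}$ are nonzero, so that $\mu[\mathcal{P}]^{2}\le O(\kappa^{2})\max_{n_{1},n_{2}}X_{n_{1},n_{2}}^{2}$. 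Consequently the event $\{\mu[\mathcal{P}]>\tau\}$ is contained in $\{\max|X|>\tau/O(\kappa)\}$, and taking $s=\tau/O(\kappa)$ in the Bernstein bound yields a per-correlation failure probability of $2\exp(-c\,\tau^{2}\alpha r/\kappa^{2})$. Finally I would union-bound over the $O(n^{2})$ index tuples for which a correlation can be nonzero (absorbing the two-sided factor and the over-count into the constant), producing the claimed $4n^{2}\exp\!\big(-\tfrac{\tau^{2}\alpha r}{O(\kappa^{2})}\big)$.

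The main obstacle I anticipate is the combinatorial bookkeeping in the lifting step: correctly counting the number of nonzero correlations for a fixed row pair and tracking exactly how the sparsity parameter $\kappa$ enters, i.e. securing $\kappa^{2}$ rather than $\kappa$ in the exponent denominator, as well as pinning down the total $O(n^{2})$ nonzero correlations that governs the union-bound prefactor (this is delicate because the per-block upward-Circulant shifts are chosen independently). Also requiring care is the precise verification, under the skew-Circulant shift-and-sign pattern, that each block contribution $Y_{j'}$ is mean-zero with pairwise-uncorrelated sign-products. Once the range and variance of the $Y_{j'}$ are established, the concentration step itself is routine.
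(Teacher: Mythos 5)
Your overall skeleton is the same as the paper's: a pointwise tail bound for a single cross-correlation $\vv{P}_{i,n_{1}}^{T}\vv{P}_{j,n_{2}}$, a lifting step using the fact that sparsity leaves only $O(\kappa^{2})$ (the paper says $\kappa$) nonzero correlations per column, so that $\{\mu[\mathcal{P}]>\tau\}$ forces some single correlation to exceed $\tau/O(\kappa)$, and a final union bound giving the $4n^{2}$ prefactor (your $O(n^{2})$ count is as loose as the paper's, and both of you gloss equally over mean-zeroness when two shifts accidentally coincide). Where you genuinely differ is the pointwise step. The paper expands the correlation into its individual coordinate products $w_{k}$ (at most $\alpha r$ of them nonzero, each bounded by $\frac{1}{\alpha r}$); these are dependent within a block, but since each entry of $\vv{h}^{j'}$ occurs in at most two products (the degree-$\leq 2$ coherence-graph observation), they split into at most three mutually independent families, and Azuma/Hoeffding on each family gives $\mathbb{P}[|X|>a]\leq 3e^{-a^{2}\alpha r/O(1)}$ \emph{for every} $a$. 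You instead group by block, $X=\sum_{j'}Y_{j'}$, use independence of the $\vv{h}^{j'}$ across blocks, and run Bernstein with range $|Y_{j'}|\leq \frac{1}{r}$ and variance $\mathrm{Var}(Y_{j'})\leq\frac{1}{\alpha r^{2}}$. This is cleaner in that no decoupling/coloring is needed.

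The price of that cleanliness is a genuine, if confined, gap: Bernstein's exponent is $\asymp \frac{s^{2}}{V+Ms}$ with $V\leq\frac{1}{\alpha r}$, $M\leq\frac{1}{r}$, which equals $\Theta(s^{2}\alpha r)$ only when $s\lesssim\frac{1}{\alpha}$; for $s\in(\frac{1}{\alpha},1]$ it degrades to $\Theta(sr)$, short of the claimed $s^{2}\alpha r$ by the factor $s\alpha$. After substituting $s=\tau/O(\kappa)$, your argument therefore proves the stated inequality only for $\tau=O(\kappa/\alpha)$, whereas the theorem asserts it for all $\tau$, and the uncovered range $\tau\in(\Theta(\kappa/\alpha),\kappa)$ is exactly where the deterministic bound $\mu[\mathcal{P}]\leq\kappa$ does not yet make the event empty; the discrepancy cannot be hidden in the $O(\kappa^{2})$ constant once $\alpha$ grows with $n$. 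The paper's route avoids this because Hoeffding-type bounds applied to the $\leq\alpha r$ individual products, each of range $\frac{2}{\alpha r}$, never see a range term: the variance proxy is $O(\frac{1}{\alpha r})$ uniformly in $a$. You can repair your proof either by restricting the claim to the small-$\tau$ regime (which is all the paper's applications use), or by pushing your analysis one level down: each $Y_{j'}$ is itself a sum of at most $\alpha$ products in which every entry of $\vv{h}^{j'}$ appears at most twice, so the same three-coloring-plus-Hoeffding step shows $Y_{j'}$ is subgaussian with parameter $O(\frac{1}{\alpha r^{2}})$, and summing independent subgaussians over blocks restores $e^{-cs^{2}\alpha r}$ for all $s$.
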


Note that increasing rank $r$ leads to sharper upper bounds on the coherence $\mu[\mathcal{P}]$ (in practice $r$ polynomial in $log(n)$ suffices) and thus, 
from what we have said so far, to better concentration results for the entire structured scheme. 
Analogous variance bounds can also be derived for Toeplitz-like semi-Gaussian matrices where the $\vv{h}^i$ vectors are chosen to be dense. But due to lack of space, these results are included in our supplementary material.

\section{Empirical Support}
In this section, we compare feature maps obtained with fully Gaussian,  
Fastfood, Circulant, and Toeplitz-like matrices with increasing displacement rank. 
Our goal is to lend support to the theoretical contributions of this paper by showing that high-quality 
feature maps can be constructed from a broad class of structured matrices as instantiations of the proposed $\P$-model.
%Our goal is to lend support to the theoretical contributions of this paper by showing that high-quality 

{\bf Kernel Approximation Quality}: In Figure~\ref{fig:g50c}, we report  relative Frobenius error in reconstructing the Gram matrix, i.e. $\frac{\|\vv{K} - \vv{\tilde{K}}\|_{fro}}{\|\vv{K}\|_{fro}}$ where $\vv{K}, \vv{\tilde{K}}$ denote the exact and approximate Gram matrices, as a function of the number of random features.   We use the g50c dataset which comprises of $550$ examples drawn from multivariate Gaussians in $50$-dimensional space with means separated such that the Bayes error is $5\%$.  We see that Circulant matrices and Toeplitz-like matrices with very low displacement rank (1 or 2) perform as well as  Fastfood feature maps. In all experiments, for Toeplitz-like matrices, we used skew-Circulant parameters (the $\vv{h}$ vectors in Eqn.~\ref{eq:circ_skewcirc}) with average sparsity of $5$. As the displacement rank is increased, the budget of randomness increases and the reconstruction error approaches that of Gaussian Random features, as expected based on our theoretical results.
\begin{figure}[h]
\begin{center}
\includegraphics[height=4cm,width=0.8\linewidth]{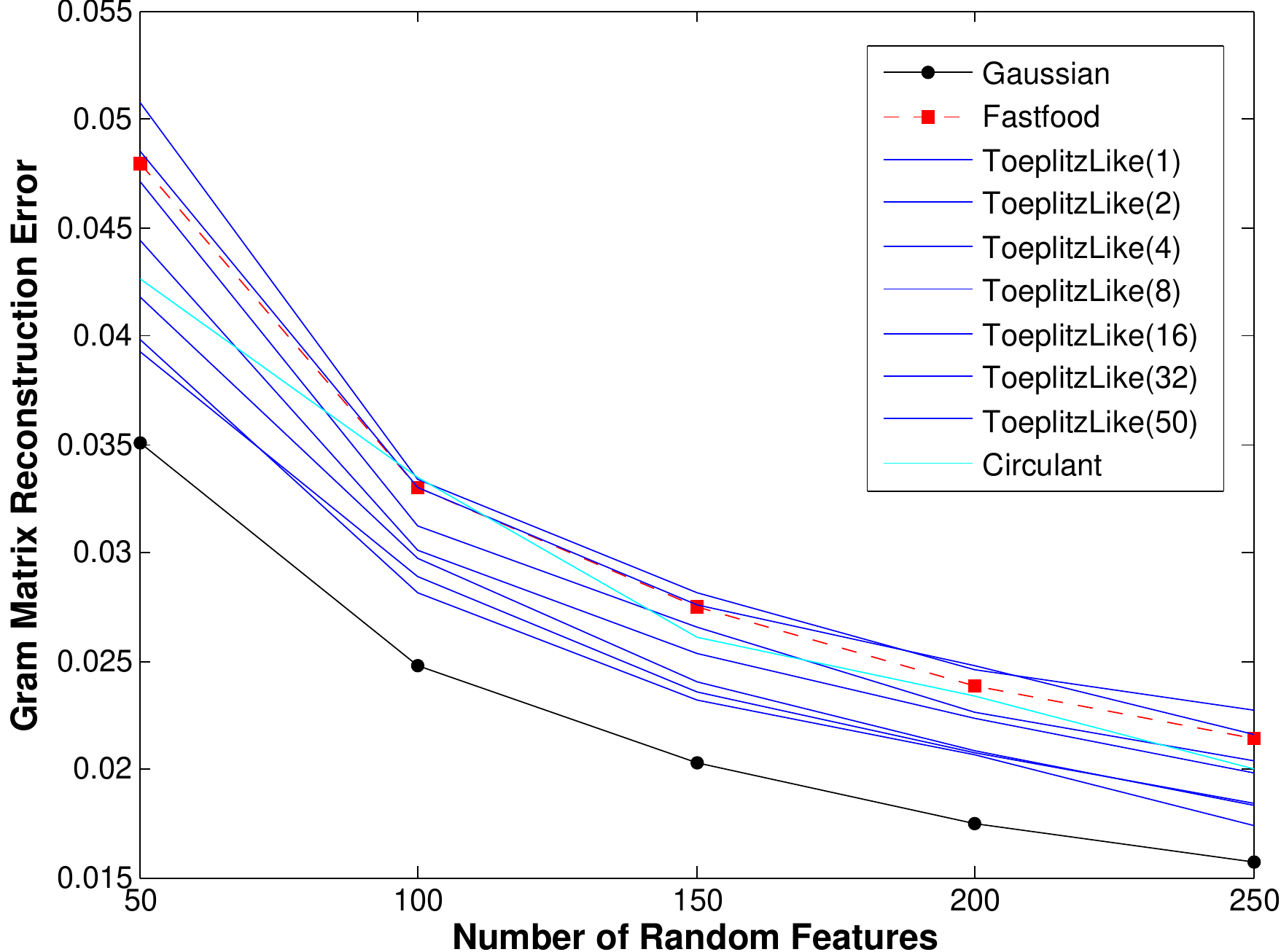}\\\label{fig:g50c}
\end{center}
\caption{Lower blue curves (better reconstruction) correspond to Toeplitz-like matrices with increasing displacement rank.}
\end{figure}
\begin{table*}[t]
\begin{center}
\caption{Kernel approximation (first row) and classification error (second row) in percentage for Complex Exponential (Gaussian Kernel).}\label{tab:complexexp}
{\scriptsize
\begin{tabular}{cccc|cccccc}
 & & Gaussian & QMC (Halton) & Fastfood & Circulant & ToeplitzLike(1) & ToeplitzLike(5) & ToeplitzLike(10) & ToeplitzLike(20)\\
\hline
 & \multirow{2}{*}{USPS (k=256)}  & 5.06 & 5.05 & 6.76 & 7.61 & 9.66 & 7.55  & 6.86 & {\bf 6.68}\\
 & & 7.12 & 6.90 &  7.37 &  7.54 & 7.72 & 7.44  & 7.46  & {\bf 7.29}\\
& \multirow{2}{*}{USPS (k=1280)} & 2.32&    2.15 &       3.06 &      3.32 &       4.41&           3.35&     3.16&     {\bf 3.00}\\
	 & 	&			4.52&      4.73&       4.62&      4.53&       4.62&           4.58 &           {\bf 4.53}            & 4.65\\
\hline 

&\multirow{2}{*}{DNA (k=80)} & 3.6 & 3.51 & 5.01 & 4.62 & 6.26 & 4.65 & 4.40 & {\bf 4.10} \\
  &                             &31.04  & 30.94 & 31.04 & 30.94 &  31.35 &  30.82  &{\bf 30.29}   & 30.70 \\
&\multirow{2}{*}{DNA ($k=900$)} & 1.61 & 1.59 &       2.23 &      2.06 &       2.88&           2.09 &           1.93 &            {\bf 1.83} \\
   &                             &16.5  & 15.01 &        16.94 &      16.63 &        16.82     &      {\bf 16.34} &           16.57 &            16.57\\
   \hline
&\multirow{2}{*}{COIL ($k=1024$)} & 2.74 & 2.41 & {\bf 3.67} & 4.45 & 5.60 & 4.47  & 4.09 & 3.79\\
&							  & 0.52 & 1.11 &  {\bf 0.49} &  0.62 & 0.62 & 0.48  & 0.57  & 0.52\\
	& \multirow{2}{*}{COIL ($k=2048$)}  & 1.92 &     1.87  &     {\bf 2.64} &      3.14 &      4.18 &          3.04 &           2.87 &             2.76\\  
& & 0.17  &   0.28 &      {\bf 0.15} &     0.19 &      0.19 &          0.20    &      0.19 &           0.19\\
								  \hline		  
\end{tabular}
}\end{center}
\end{table*}
Results on publicly available real-world classification datasets, averaged over $100$ runs, are reported in Table~\ref{tab:complexexp} for complex exponential nonlinearity (Gaussian kernel). Results with ReLU (arc-cosine) are similar but not shown for lack of space.  As observed in previous papers, better Gram matrix approximation is not often correlated with higher classification accuracy. Nonetheless, it is clear that the design of space of valid feature map constructions based on structured matrices is much larger than what has so far been explored in the literature:  Circulant and Toeplitz-like matrices are very competitive with Fastfood, and sometimes give better results particularly with increasing displacement rank. The effectiveness of such feature maps for nonlinearities other than the complex exponential also validates our theoretical contributions. Among the unstructured baselines, we also include Quasi-Monte Carlo (QMC) feature maps of~\cite{qmc} using Halton low-discrepancy sequences. The use of structured matrices to accelerate QMC techniques building on~\cite{DickFastQMC} is of interest for future work.

\iffalse
\begin{table*}
\caption{Kernel approximation error (first row) and classification error (second row) in percentage for ReLU (Arc-cosine ($p=1$) kernel).}\label{tab:relu}
\begin{center}
{\scriptsize
\begin{tabular}{cccc|c|ccccc}
 & & Gaussian & QMC (Halton) & Fastfood & Circulant & ToeplitzLike(1) & ToeplitzLike(5) & ToeplitzLike(10) & ToeplitzLike(20)\\
\hline
& \multirow{2}{*}{USPS ($k=1280$)} & 6.14   &   4.52 &      {\bf 7.33} &     7.64  &     8.58  &         7.81  &         7.75  &          7.37\\
&& 6.2 &      7.30 &      6.41 &      6.52 &       6.27 &           6.27&          6.49        &     {\bf 6.29}\\
\hline 
&\multirow{2}{*}{DNA ($k=180$)} & 19.43  &    16.89   &    23.44  &    20.55  &     24.4      &      22.29     &      21.48      &      {\bf 20.28}\\
   &                             &16.11  &     14.84 &       16.96  &    {\bf 14.65}  &     16.68    &       15.78   &        15.61     &       15.68 \\
   \hline
&\multirow{2}{*}{COIL ($k=1024$)} &  4.72  &   2.07 &        5.68 &    5.05  &    7.50 &           5.44 &           5.07&            {\bf 4.83} \\
	&							  & 0.11      &    0.0   &    {\bf 0.07} &   0.11    &     0.1       &    0.13  &        0.09 &         0.08\\\hline		  
\end{tabular}
}\end{center}
\end{table*}
\fi

\begin{figure}[h]
\begin{center}
\caption{Lower blue curves (smaller speedup) correspond to Toeplitz-like matrices with increasing displacement rank.}
\includegraphics[height=4.3cm, width=0.8\linewidth]{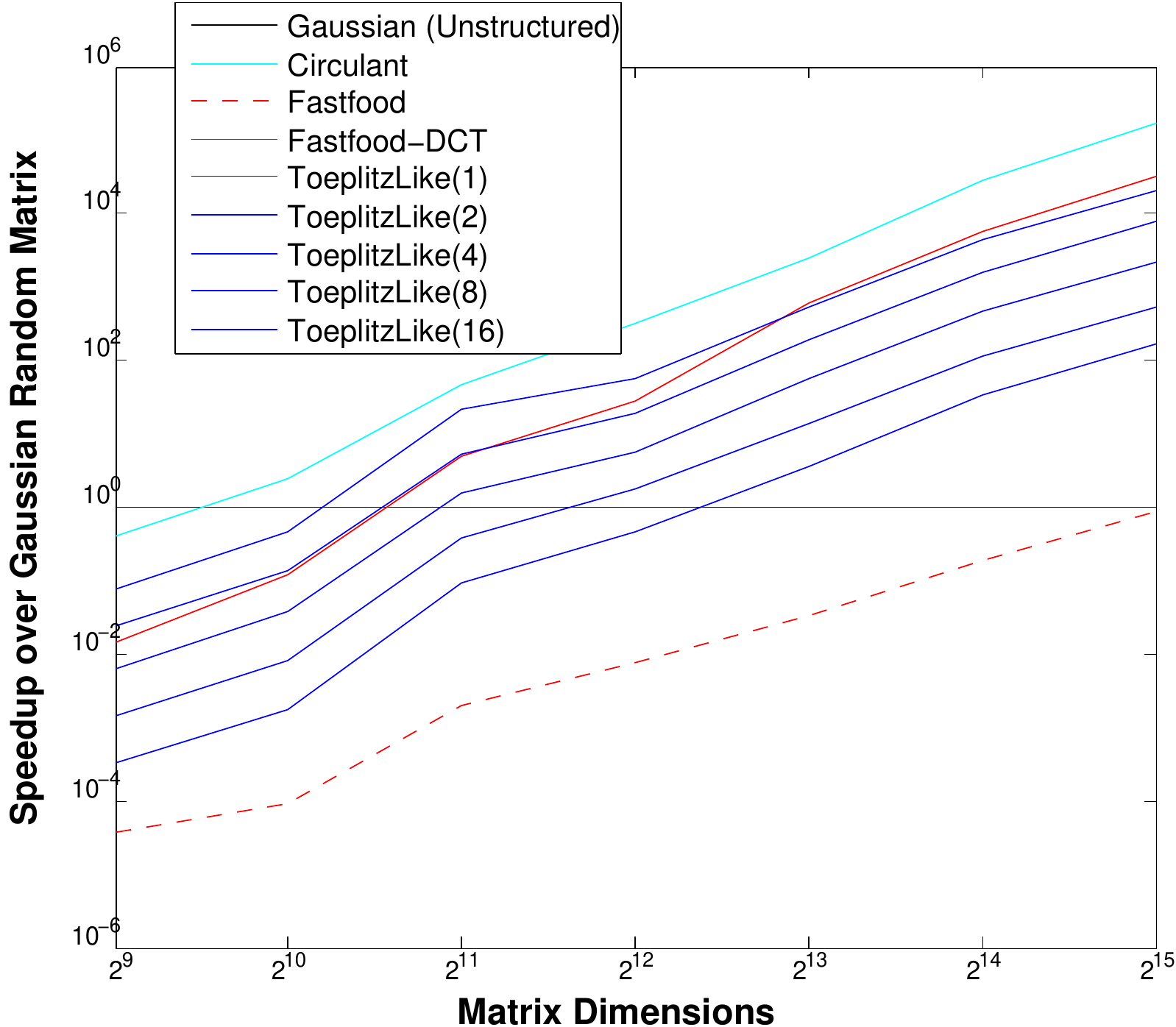}\label{fig:speedup}
\end{center}
\end{figure}

{\bf Speedups}: Figure~\ref{fig:speedup} shows the speedup obtained in featuremap construction time using structured matrices relative to 
using unstructured  Gaussian random matrices (on a 6-core 32-GB Intel(R) Xeon(R) machine running Matlab R2014a). The benefits of 
sub-quadratic matrix-vector multiplication with FFT-variations tend to show up beyond $1024$ dimensions. Circulant-based feature 
maps are the fastest to compute. Fastfood (with DCT instead of Hadamard matrices) is about as fast as Toeplitz-like matrices with 
displacement rank 1 or 2. Higher displacement rank matrices show speedups at higher dimensions as expected. Fastfood with inbuilt 
\texttt{fwht} routine in Matlab performed poorly in our experiments.

%\iffalse
\section{Conclusions}
\label{sec:conclusions}
We have theoretically justified and empirically validated the use of a broad family of structured matrices 
for accelerating the construction of random embeddings for approximating various kernel functions. 
In particular, the class of Toeplitz-like semi-Gaussian matrices allows our construction to span highly compact to fully random matrices. 
%We are next interested in exploring other such constructions for non-Gaussian distributions and understanding the connections 
%between coherence and discrepancy in Quasi-Monte Carlo based feature maps~\cite{DickFastQMC}.

%We presented in this paper Fast Kernel Transform that uses low displacement rank matrices (LDRM) mechanism to speed up the computations of the gaussian and arc-cosine kernels with
%only minor loss of accuracy. The advantage of the LDRM over other structured approaches is the option for granular transition from the unstructured setting to the fully structured one.
%Thus one can easily balance between the accuracy of the kernel reconstruction and the speed of the overall computations and tune the structuredness level to individual needs. 
%The LDRM mechanism is in fact a special case of the presented
%$\mathcal{P}$-model that covers most structured schemes used so far for this task. Extensive experiments showed the effectiveness of the proposed approach and theoretical results
%proved high quality of kernels' approximation obtained with the introduced structured mechanism.  
%\fi

% In the unusual situation where you want a paper to appear in the
% references without citing it in the main text, use \nocite
%\nocite{langley00}

\bibliography{nonlinear_embed_ldrm,qmc,hilbert,RandomLaplace,structured}

\begin{thebibliography}{22}
\providecommand{\natexlab}[1]{#1}
\providecommand{\url}[1]{\texttt{#1}}
\expandafter\ifx\csname urlstyle\endcsname\relax
  \providecommand{\doi}[1]{doi: #1}\else
  \providecommand{\doi}{doi: \begingroup \urlstyle{rm}\Url}\fi

\bibitem[Alon \& Chazelle(2009)Alon and Chazelle]{FJLT}
Alon, N. and Chazelle, B.
\newblock The fast johnson lindenstrauss transform and approximate nearest
  neighbors.
\newblock In \emph{SIAM J. COMPUT.}, 2009.

\bibitem[Alon \& Chazelle(2011)Alon and Chazelle]{HMT}
Alon, N. and Chazelle, B.
\newblock Finding structure with randomness: Probabilistic algorithms for
  constructing approximate matrix decompositions.
\newblock In \emph{SIAM Review}, 2011.

\bibitem[Bajwa et~al.(2007)Bajwa, Haupt, Raz, Wright, and Nowak]{ToeplitzCS}
Bajwa, W., Haupt, J., Raz, G., Wright, S., and Nowak, R.
\newblock Toeplitz structured compressed sensing matrices.
\newblock In \emph{IEEE/SP Workshop on Statistical Signal Processing-SSP},
  2007.

\bibitem[Bochner(1933)]{bochner1933}
Bochner, S.
\newblock Monotone funktionen, {S}tieltjes integrale und harmonische analyse.
\newblock \emph{Math. Ann.}, 108, 1933.

\bibitem[Cho \& Saul(2009)Cho and Saul]{cho}
Cho, Youngmin and Saul, Lawrence~K.
\newblock Kernel methods for deep learning.
\newblock In \emph{Neural Information Processing Systems}, 2009.

\bibitem[Choromanska et~al.(2016)Choromanska, Choromanski, Bojarski, Jebara,
  Kumar, and LeCun]{choromanska2015binary}
Choromanska, Anna, Choromanski, Krzysztof, Bojarski, Mariusz, Jebara, Tony,
  Kumar, Sanjiv, and LeCun, Yann.
\newblock Binary embeddings with structured hashed projections.
\newblock \emph{ICML}, 2016.

\bibitem[Dick et~al.(2015)Dick, Gia, Kuo, and Schwab]{DickFastQMC}
Dick, HJ., Gia, Q.T.~Le, Kuo, F.~Y., and Schwab, Ch.
\newblock Fast qmc matrix-vector multiplication.
\newblock \emph{SIAM J. Sci. Comput.}, 37, 2015.

\bibitem[Golub \& Loan(2012)Golub and Loan]{Golub}
Golub, G. and Loan, C.~V.
\newblock \emph{Matrix Computations}.
\newblock Johns Hopkins University Press, 4rth edition, 2012.

\bibitem[Gray(2006)]{GrayCirculantToeplitz}
Gray, R.
\newblock Toeplitz and circulant matrices: A review.
\newblock \emph{Foundations and Trends in Communications and Information Theory
  >}, 2, 2006.

\bibitem[Huang et~al.(2014)Huang, Avron, Sainath, Sindhwani, and
  Ramabhadran]{TIMIT}
Huang, P., Avron, H., Sainath, T., Sindhwani, V., and Ramabhadran, B.
\newblock Kernel methods match deep neural networks on timit.
\newblock In \emph{IEEE International Conference on Acoustics, Speech, and
  Signal Processing}, 2014.

\bibitem[Kailath et~al.(1979)Kailath, Kung, and Morf]{KKM79}
Kailath, T., Kung, S.~Y., and Morf, M.
\newblock Displacement ranks of matrices and linear equations.
\newblock \emph{Journal of Mathematical Analysis and Applications}, pp.\
  395--407, 1979.

\bibitem[Le et~al.(2013)Le, Sarl{\'o}s, and Smola]{LSS13}
Le, Q., Sarl{\'o}s, T., and Smola, A.
\newblock Fastfood -- {A}pproximating kernel expansions in loglinear time.
\newblock In \emph{Proc. of the 30th International Conference on Machine
  Learning (ICML)}, 2013.

\bibitem[Mahoney(2011)]{Mahoney}
Mahoney, M.
\newblock Randomized algorithms for matrices and data.
\newblock \emph{Foundations and Trends in Machine Learning}, 3, 2011.

\bibitem[Pan(2001)]{Pan}
Pan, V.
\newblock \emph{Structured Matrices and Polynomials: Unified Superfast
  Algorithms}.
\newblock Springer, 2001.

\bibitem[Rahimi \& Recht(2007)Rahimi and Recht]{RR07}
Rahimi, A. and Recht, B.
\newblock Random features for large-scale kernel machines.
\newblock In \emph{NIPS}, 2007.

\bibitem[Rauhut et~al.(2012)Rauhut, Romberg, and Tropp]{CirculantCS}
Rauhut, H., Romberg, J., and Tropp, J.
\newblock Restricted isometries for partial random circulant matrices.
\newblock \emph{Appl. Comput. Harmonic Anal}, 32\penalty0 (2), 2012.

\bibitem[Sch{\"o}lkopf \& Smola(2002)Sch{\"o}lkopf and Smola]{KernelsBook}
Sch{\"o}lkopf, B. and Smola, A. (eds.).
\newblock \emph{Learning with Kernels: Support Vector Machines, Regularization,
  Optimization and Beyond}.
\newblock MIT Press, 2002.

\bibitem[Sindhwani et~al.(2015)Sindhwani, Sainath, and
  Kumar]{StructuredTransforms}
Sindhwani, V., Sainath, T., and Kumar, S.
\newblock Structured transforms for small footprint deep learning.
\newblock In \emph{NIPS}, 2015.

\bibitem[Vedaldi \& Zisserman(2012)Vedaldi and Zisserman]{VedaldiZisserman2012}
Vedaldi, A. and Zisserman, A.
\newblock Efficient additive kernels via explicit feature maps.
\newblock \emph{IEEE Transactions on Pattern Analysis and Machine
  Intelligence}, 3\penalty0 (34):\penalty0 480--492, 2012.

\bibitem[Xie et~al.(2015)Xie, Liang, and Song]{LeSong}
Xie, B., Liang, Y., and Song, L.
\newblock Scale up nonlinear component analysis with doubly stochastic
  gradients.
\newblock In \emph{NIPS}, 2015.

\bibitem[Yang et~al.(2014)Yang, Sindhwani, Avron, and Mahoney]{qmc}
Yang, J., Sindhwani, V., Avron, H., and Mahoney, M.
\newblock Qmc feature maps for shift-invariant kernels.
\newblock In \emph{ICML}, 2014.

\bibitem[Yu et~al.(2015)Yu, Kumar, Rowley, and Chang]{felix}
Yu, Felix~X., Kumar, Sanjiv, Rowley, Henry~A., and Chang, Shih{-}Fu.
\newblock Compact nonlinear maps and circulant extensions.
\newblock \emph{CoRR}, abs/1503.03893, 2015.
\newblock URL \url{http://arxiv.org/abs/1503.03893}.

\end{thebibliography}
\bibliographystyle{icml2016}
\newpage
\section{Appendix}

We now prove all theoretical results of the paper.
We need to introduce some technical denotation.

From now on $f$ denotes one from the following functions: $\sin$, $\cos$, $sign$ or a linear rectifier. We call the set of these functions $\mathcal{F}$.
For two vectors $v,w$ we denote by $v \cdot w$ their dot product.
We denote by $G_{struct}^{i}$ for $i=1,...,\frac{k}{m}$ the building blocks of the structured matrix constructed according to the $\mathcal{P}$-model that are vertically stacked to 
produce the final structured matrix.
Let $v^{1},v^{2} \in \mathbb{R}^{n}$ be two datapoints from the preprocessed input-dataset $D_{1}HD_{0}\mathcal{X}$. 
Let $d$ be a fixed integer constant.
Let $R=\{i_{1},...,i_{r}\}$ be some $r$-element subset of the set $\{1,...,m\}$, where $m$ stands for the number of rows used in the construction of matrices $G_{struct}^{i}$ (key building blocks of our structured mechanism). Finally, let $\alpha_{1},...,\alpha_{r}$ be positive integers such that
$\alpha_{1}+...+\alpha_{r} = d$. 
\begin{definition}
For three vectors: $v,w,z \in \mathbb{R}^{n}$ and a given nonlinear function $f \in \mathcal{F}$ we denote:
$$\phi(v,w,z) = f(z \cdot v) f(z \cdot w).$$
\end{definition}
We will show that for a variety of functions $\Psi : \mathbb{R}^{r} \rightarrow \mathbb{R}$  the expected value of the expression $T^{G,d}_{v^{1},v^{2}}(\mathcal{R}, \alpha_{1},...,\alpha_{r})$ given by the formula:
\begin{equation}
\label{eq-unstructured}
\Psi(\phi_{1}(v^{1},v^{2},g^{i_{1}})^{\alpha_{1}},...,\phi_{r}(v^{1},v^{2},g^{i_{r}})^{\alpha_{r}}),
\end{equation}
where $g^{1},...,g^{m}$ is the set of $m$ gaussian vectors forming gaussian matrix $G$, each obtained by sampling independently $n$ values from the distribution $\mathcal{N}(0,1)$ and $\phi_{i}$s differ by the choice of nonlinear mapping $f_{i} \in \mathcal{F}$, can be accurately approximated by its structured version $T^{A,d}_{v^{1},v^{2}}((\mathcal{R}, \alpha_{1},...,\alpha_{r})$
which is of the form:
\begin{equation}
\label{eq-structured}
\Psi(\phi_{1}(v^{1},v^{2},a^{i_{1}})^{\alpha_{1}},...,\phi_{r}(v^{1},v^{2},a^{i_{r}})^{\alpha_{r}}),
\end{equation}
where $a^{1},...,a^{m}$ are rows of the structured matrix $A = G_{struct}^{i}$.
The importance of $T^{G,d}_{v^{1},v^{2}}(\mathcal{R}, \alpha_{1},...,\alpha_{r})$ and $T^{A,d}_{v^{1},v^{2}}(\mathcal{R}, \alpha_{1},...,\alpha_{r})$  lies in the fact that $d^{th}$ moments of the random variables approximating considered kernels in the unstructured and structured mechanism can be expressed as weighted sums of the expressions of the form $T^{G,d}_{v^{1},v^{2}}(\alpha_{1},...,\alpha_{r})$
and $T^{A,d}_{v^{1},v^{2}}(\alpha_{1},...,\alpha_{r})$ respectively if $\Psi(x_{1},...,x_{r}) = x_{1} \cdot ... \cdot x_{r}$.
Thus if $T^{A,d}_{v^{1},v^{2}}(\alpha_{1},...,\alpha_{r})$ closely approximates $T^{G,d}_{v^{1},v^{2}}(\alpha_{1},...,\alpha_{r})$ then the corresponding moments are similar. That, as we will see soon, implies several theoretical guarantees for the structured method. In particular, this means that the variances are similar. Since in the unstructured setting the variance is of the order $O(\frac{1}{m})$, that will be also the case for the structured setting. This in turn will imply concentration results providing theoretical explanation for the observations from the experimental section that show the quality of the proposed structured setting.

We need to introduce a few definitions.

\begin{definition}
We denote by $\Delta^{\xi}_{s}$ the supremum of the expression $\|\xi(y_{1},...,y_{m}) - \xi(y^{\prime}_{1},...,y^{\prime}_{m})\|$ over all pairs of vectors $(y_{1},...,y_{m}), (y^{\prime}_{1},...,y^{\prime}_{m})$ from the domain $\mathcal{D}$ that differ on at most one dimension and by at most $s$.
We say that a function $\xi : \mathbb{R}^{m} \rightarrow \mathbb{R}$ is $M$-bounded in the domain $\mathcal{D}$ if $\Delta^{\xi}_{\infty} = M$.
\end{definition}

Note that the value of the function $\phi_{i}(v^{1},v^{2},g^{i})^{\alpha_{i}}$
depends only on the projection $g^{i}_{proj}$ of $g^{i}$ on the $2$-dimensional space spanned by $v^{1}$ and $v^{2}$. Thus for a given pair $v^{1},v^{2}$ function $\phi$ is in fact a function $B^{v^{1},v^{2}}_{i}$ of this projection.

\begin{definition}
Define:
\begin{align}
\begin{split}
p_{\lambda,\epsilon} = \sup_{i, v^{1},v^{2}, \|\zeta|_{\infty} \leq \epsilon}
\mathbb{P}[|B^{v^{1},v^{2}}_{i}(g^{i}_{proj}+\zeta)-\\
B^{v^{1},v^{2}}_{i}(g^{i}_{proj})| > \lambda],
\end{split}
\end{align}
where the supremum is taken over all indices $i=1,...,m$, all pairs of linearly independent vectors from the domain, all coordinate systems in $span(v^{1},v^{2})$ and vectors $\zeta$ of $L_{1}$-norm at most $\epsilon$ in some of these coordinate systems.
\end{definition}

We will use the following notation: $\sigma_{i,j}(n_{1},n_{2}) = \vv{P}_{i,n_{1}}^{T}\vv{P}_{j,n_{2}}$.
To compress the statements of our theoretical results, we will use also the following notation:
$$\xi(i_{i},i_{2}) = 2\chi(i_{1},i_{2}) \sqrt{\sum_{1 \leq n_{1} < n_{2} \leq n} (\sigma_{i_{1},i_{2}}(n_{1},n_{2}))^{2}},$$ 
We will also denote: 
$\lambda(i_{1},i_{2}) = \sum_{j=1}^{n}|\sigma_{i_{1},i_{2}}(j,j)|$ and
$\tilde{\lambda}(i_{1},i_{2}) = |\sum_{j=1}^{n}\sigma_{i_{1},i_{2}}(j,j)|$
for $1 \leq i_{1} \leq i_{2} \leq m$ (see: \ref{sec:p_model}).

Note first that the preprocessing step preserves kernels' values since transformation $HD_{0}$
is an isometry and considered kernels are spherically-invariant.
We start with Lemma \ref{unbiasedness_lemma}.

\begin{proof}
Note that it suffices to show that for any two given vectors $x,y \in \mathbb{R}^{n}$ the following holds: 
\begin{equation}
\label{mean_eq_1}
\mathbb{E}[f(G^{i}_{struct}x) \cdot f(G^{i}_{struct}y)]=\mathbb{E}[f(Gx) \cdot f(Gy)],
\end{equation}
where $G$ is the unstructured gaussian matrix. Let $g^{i,j}_{struct}$ be the $j^{th}$ row of $G^{i}_{struct}$
and let $g^{j}$ be the $j^{th}$ row of $G$. Note that we have:
\begin{equation}
\label{mean_eq_2}
\mathbb{E}[f(g^{i,j}_{struct} \cdot x) f(g^{i,j}_{struct} \cdot y)]=\mathbb{E}[f(g^{j} \cdot x)f(g^{j} \cdot y)].
\end{equation}
The latter follows from the fact that $g^{i,j}_{struct}$ has the same distribution as $g$. To see this note that
$g^{i,j}_{struct} = g \cdot P_{i}$. Thus dimensions of $g^{i,j}_{struct}$ are projections of $g$ onto columns of $P_{i}$.
Each projection is trivially gaussian from $\mathcal{N}(0,1)$ (that is implied by the fact that each column is normalized).
The independence of different dimensions of $g^{i,j}_{struct}$ comes from the observation that different columns are orthogonal.
Thus we can use a simple property of gaussian vectors stating that the projections of a gaussian vector on mutually orthogonal directions are
independent. The equation \ref{mean_eq_1} implies equation \ref{mean_eq_2} by the linearity of expectation and that completes the proof.
\end{proof}

Now we prove Theorem \ref{main_theorem_1}.
This one is easily implied by a more general result that we state below.
We will assume that function $\Psi$ from equations: \ref{eq-unstructured}, \ref{eq-structured} is $M$-bounded for some given $M>0$.
We will assume that expected values defining $T^{A,d}$ are not with respect to the random choices determining $P_{i}s$.
%Denote by $poly(n)$ a function of the form $n^{\alpha}$ for some $\alpha > 0$.

\begin{theorem}
\label{main_theorem}
Let $v^{1},v^{2} \in \mathbb{R}^{n}$ be two vectors from a dataset $\mathcal{X}$.
Let $\mathcal{R}=\{i_{1},...,i_{r}\} \in \{1,...,m\}$ and let $\alpha_{1},...,\alpha_{r}$ be the set
of positive integers such that $\alpha_{1} + ... + \alpha_{r} = d$.
Assume that each structured matrix $G_{struct}^{i}$ consists of $m$ rows
and either $\sup_{1 \leq i_{1} < i_{2} \leq m} \lambda(i_{1},i_{2}) = o(\frac{n}{\log^{2}(n)})$
if $P_{i}s$ were constructed deterministically or $sup_{1 \leq i_{1} < i_{2} \leq m} \mathbb{E}[\tilde{\lambda}(i_{1},i_{2})] = 
o(\frac{n}{\log^{2}(n)})$ if $P_{i}s$ were constructed randomly.
In the latter case assume also that for any $1 \leq i_{1} < i_{2} \leq m$ and $1 \leq n_{1} < n_{2} \leq n$ the $n_{1}^{th}$ 
column of $P_{i_{1}}$ is chosen independently from the $n_{2}^{th}$ column of $P_{i_{2}}$.
Denote by $\Psi_{max}$ the maximal value of the function $\Psi$ for the datapoints from $\mathcal{X}$.
Let $q_{v^{1},v^{2}}^{d} = |T^{A,d}_{v^{1},v^{2}}(\mathcal{R},\alpha_{1},...,\alpha_{r})-T^{G,d}_{v^{1},v^{2}}(\mathcal{R},\alpha_{1},...,\alpha_{r})|$ denote the absolute value of the difference of the two fixed terms on the weighted sum for the $d$-moments of the kernel's approximation in the structured $\mathcal{P}$-model setting and the fully unstructured setting. 
Then for any $\lambda, \epsilon > 0$, $T>0$, $n$ large enough and $P_{i}s$ chosen deterministically we have:
$$q_{v^{1},v^{2}}^{d} \leq 
(p_{gen} + p_{struct})\Psi_{max} + \sum_{i=0}^{d} p_{f}^{i}(iM + (d-i)\Delta^{\Psi}_{\lambda}),$$ where:
\begin{equation}
% 2\sqrt{\frac{m}{\pi}}e^{-\frac{mk}{2}}
p_{gen} = \frac{4r}{\sqrt{2 \pi T}} e^{-\frac{T}{2}} + 4ne^{-\frac{\log^{2}(n)}{8}},
\end{equation}
\begin{equation}
p_{f}^{i} = {d \choose i} (p_{\lambda, \epsilon})^{i}
\end{equation}
and 
\begin{align}
\begin{split}
p_{struct} =  4\sum_{i=1}^{m}\chi(i,i)e^{-\frac{1}{2\xi^{2}(i,i)}\frac{n^{2}}{\log^{6}(n)}}\\
+2\sum_{1 \leq i_{1} \leq i_{2} \leq m}\chi(i_{1},i_{2})e^{-\frac{\epsilon^{2}n^{\frac{3}{2}}}{2\xi^{2}(i_{1},i_{2})T\log^{4}(n)}}\\
\end{split}
\end{align}
If $P_{i}s$ are chosen from the probabilistic model then the above holds with probability at least 
$1-p_{wrong}$, where $p_{wrong} = 2\sum_{i \leq i_{1} < i_{2} \leq m}e^{-\frac{n^{2}}{8\log^{6}(n)\sum_{j=1}^{n}(\sigma_{i_{1},i_{2}}(j,j))^{2}}}$.
\end{theorem}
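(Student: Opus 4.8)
The plan is to reduce the comparison of the two moments to a comparison of two jointly Gaussian families, and then transfer the closeness of these families through the regularity of $\Psi$. The first observation is that $\phi_l(v^1,v^2,z)$ depends on $z$ only through the pair $(z\cdot v^1, z\cdot v^2)$, i.e. through the projection of $z$ onto $\mathrm{span}(v^1,v^2)$, which is exactly what the functions $B^{v^1,v^2}_i$ record. In $T^{G,d}$ the projections $(g^{i_l}\cdot v^1, g^{i_l}\cdot v^2)_{l=1}^r$ form an \emph{independent} family of $2$-dimensional Gaussians, whereas in $T^{A,d}$ the rows $a^{i_l}=g^T P_{i_l}$ give $(g^T(P_{i_l}v^1), g^T(P_{i_l}v^2))_{l=1}^r$, jointly Gaussian with covariance blocks $\langle P_{i_{l_1}}v^a, P_{i_{l_2}}v^b\rangle=(v^a)^T P_{i_{l_1}}^T P_{i_{l_2}} v^b$ for $a,b\in\{1,2\}$. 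Since the columns of each $\vv{P}_i$ are $l_2$-normalized (as in Lemma \ref{unbiasedness_lemma}), the diagonal blocks $l_1=l_2$ already match the unstructured family up to the within-row deviation $\langle P_i v^a, P_i v^b\rangle-\langle v^a,v^b\rangle$. Hence the statement reduces to showing (i) these within-row deviations are $o(1)$, (ii) the off-diagonal cross-covariance blocks are at most $\epsilon$, and (iii) that (i)--(ii) can be pushed through $\Psi$ at the stated cost.

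I would first isolate $p_{gen}$ by two truncations. The preprocessing $D_1 H D_0$ densifies the data, so every coordinate of $v^1,v^2$ is $O(\log(n)/\sqrt n)$ except on an event of probability at most $4n e^{-\log^2(n)/8}$ (a Hoeffding bound on the Rademacher sums $\frac{1}{\sqrt n}\sum_j \pm v_j$ and a union over the $n$ coordinates and two vectors). Separately, inside each expectation I restrict to the event that all $r$ relevant projection coordinates of $g$ lie in $[-\sqrt T,\sqrt T]$; the Gaussian tail off this event contributes $\frac{4r}{\sqrt{2\pi T}}e^{-T/2}$, and on its complement the integrand is crudely bounded by $\Psi_{max}$. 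Together these yield the $(p_{gen})\Psi_{max}$ contribution.

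The heart of the argument is controlling the covariance blocks, and this is where I expect the main difficulty. Writing $\langle P_{i_1}v^a, P_{i_2}v^b\rangle=\sum_{n_1} v^a_{n_1}v^b_{n_1}\sigma_{i_1,i_2}(n_1,n_1)+\sum_{n_1\neq n_2} v^a_{n_1}v^b_{n_2}\sigma_{i_1,i_2}(n_1,n_2)$, the diagonal sum is bounded via densification by $O(\log^2(n)/n)\cdot\lambda(i_1,i_2)$ (or its mean through $\tilde\lambda$), which is $o(1)$ precisely under the hypothesis $\lambda=o(n/\log^2 n)$. For the off-diagonal sum I would invoke the coherence graph $\mathcal{G}_{i_1,i_2}$: a proper coloring with $\chi(i_1,i_2)$ colors splits the index pairs $\{n_1,n_2\}$ into classes of pairwise-disjoint pairs, so within a class the preprocessing signs enter as \emph{independent} Rademacher variables. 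A Hoeffding/Azuma bound per class, with sub-Gaussian scale governed by $\sqrt{\sum_{n_1<n_2}\sigma^2_{i_1,i_2}(n_1,n_2)}$ (hence by $\xi(i_1,i_2)$ and, through $\mu[\mathcal{P}]$, by $\sqrt n\,\mu$), followed by a union over the $\chi(i_1,i_2)$ classes and over all pairs $i_1\le i_2$, produces the two exponential sums making up $p_{struct}$: the $i_1=i_2$ term enforces (i) to within $o(1)$, and the $i_1<i_2$ term enforces (ii) to within $\epsilon$. In the randomized-$\vv{P}_i$ regime the same estimate holds after conditioning on the diagonal correlations $\sigma_{i_1,i_2}(j,j)$ being typical, which is where $p_{wrong}$ enters.

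Finally I would transfer through $\Psi$. On the good event the structured covariance differs from the block-diagonal one only in off-diagonal blocks of size $\le\epsilon$, so I can couple the two Gaussian families — e.g. by feeding a common standard-normal seed through the two covariance square roots — making corresponding truncated projections agree coordinatewise up to $\epsilon$. Each of the $d$ factors $\phi_l^{\alpha_l}$ is then a bounded function $B$ evaluated at two $l_1$-close projections; by definition of $p_{\lambda,\epsilon}$ such a factor changes by more than $\lambda$ with probability at most $p_{\lambda,\epsilon}$, which handles the discontinuity of the $\mathrm{sign}$ nonlinearity without any Lipschitz hypothesis. Charging $M$ (via $M$-boundedness of $\Psi$) to each of the $i$ factors that jump and $\Delta^\Psi_\lambda$ (the modulus of $\Psi$ at scale $\lambda$) to the remaining $d-i$, then summing over $i$ with binomial weight $p_f^i=\binom{d}{i}p_{\lambda,\epsilon}^i$, gives exactly $\sum_{i=0}^d p_f^i(iM+(d-i)\Delta^\Psi_\lambda)$. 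Collecting the three contributions yields the claimed bound; the genuinely nontrivial step is the chromatic decomposition that turns the correlated off-diagonal sum into independent color classes, while the remaining obstacle is the careful simultaneous bookkeeping of truncation, coupling, and discontinuity errors.
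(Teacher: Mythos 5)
Your proposal follows essentially the same route as the paper's proof: reduction to the $2$-dimensional projections, $\log(n)$-balancedness of the preprocessed vectors via Hoeffding plus a union bound, chromatic-number decomposition of the coherence graphs feeding Azuma's inequality to show the covariance structure is nearly block-diagonal (with the diagonal terms handled by the $\lambda$/$\tilde\lambda$ hypothesis and $p_{wrong}$ in the random case), a Gaussian truncation at scale $T$, and the binomial ``bad indices'' accounting through $p_{\lambda,\epsilon}$, $M$ and $\Delta^{\Psi}_{\lambda}$. The only cosmetic difference is that you couple the two Gaussian families through covariance square roots, where the paper instead applies Gram--Schmidt to the vectors $s^{i,j}$ to produce exactly orthogonal $w^{i,j}$ at distance $O(\kappa\,\Gamma(2r))$; both realize the same $\epsilon$-perturbation argument.
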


\begin{proof}
Consider the expression $$q^{d}_{v_{1},v_{2}} = |T^{A,d}_{v^{1},v^{2}}(\mathcal{R}, \alpha_{1},...,\alpha_{r}) - 
T^{G,d}_{v^{1},v^{2}}(\mathcal{R}, \alpha_{1},...,\alpha_{r})|.$$ 
We will use formulas for $T^{G,d}$ and $T^{A,d}$ given by equations: \ref{eq-unstructured} and \ref{eq-structured}.
Without loss of generality we will assume that $A=G^{i}_{struct}D_{1}$ i.e. in our theoretical analysis we will make $D_{1}$
a part of the structured mechanism and move it away from the preprocessing phase (obviously both ways are equivalent because of the 
associative property of matrix mutliplication).
We have already noted that each argument of the function $\Psi$ from equations: \ref{eq-unstructured} and \ref{eq-structured}
depends only on the projections of $a^{i_{1}},...,a^{i_{r}}$ on the $2$-dimensional space spanned by $v^{1}$ and $v^{2}$.
Denote these projections as: $a^{i_{1}}_{proj}$,...,$a^{i_{r}}_{proj}$ respectively and fix some orthonormal basis $\mathcal{B}$ of this $2$-dimensional space. 
As we will see soon, in the $\mathcal{P}$-model setting the coordinates of $a^{i}_{proj}s$ in $\mathcal{B}$ can be expressed as $g \cdot s^{i,j}$ for $j=1,2$, 
where $g$ is a vector representing a budget of randomness of the corresponding $\mathcal{P}$-model and $s^{i,j}$s are some vectors from $\mathbb{R}^{t}$
(parameter $t$ stands for the length of $g$).

We will show that $s^{i,j}$s, even though not necessarily pairwise orthogonal, are close to be pairwise orthogonal with high probability.
Let us assume now that vectors $s^{i,j}$ can be chosen in such a way that each $s^{i,j}$ satisfies: $s^{i,j} = w^{i,j} + \rho(i,j)$,
where vectors $w^{i,j}$ are mutually orthogonal, we have $\|s^{i,j}\|_{2} = \|w^{i,j}\|_{2}$ and furthermore  
$\|\rho(i,j)\|_{2} \leq \rho$ for some given $\rho > 0$. We call this property the $\rho$-orthogonality property.
We will later show that the $\rho$-orthogonality property depends on the random diagonal matrix $D_{1}$.

Assume now that the $\rho$-orthogonality property is satisfied. Denote by $g^{\mathcal{H}}$ the projection of the ``budget-of-randomness'' 
vector $g$ onto $2r$-dimensional linear space $\mathcal{H}$ spanned by vectors from $\{s^{i,j}\}$.
Note that then the coordinates of $a^{i}_{proj}s$ in $\mathcal{B}$ can be rewritten as $g \cdot w^{i,j} + \epsilon(i,j)$,
where $|\epsilon(i,j)| \leq \epsilon$ and $\epsilon = \|g^{\mathcal{H}}\|_{2} \rho$.
Thus each $\psi_{i}$ in the formula from equation \ref{eq-structured} can be then
expressed as $B_{i}^{v^{1},v^{2}}(g^{i}_{proj} + \epsilon(i))$, where $g^{i}_{proj}s$ stand for the
projections onto $2$-dimensional linear space spanned by $v^{1}$ and $v^{2}$ of independent copies of gaussian vectors $g^{i}$. Each $g^{i}$ is of the same distribution as the corresponding
structured vector $a^{i}$ and $\epsilon(i)s$ are vectors with the $L_{1}$-norm satisfying
$\|\epsilon(i)\| \leq \epsilon$. The independence comes from the fact that variables of the form $g \cdot w^{i,j}$ are independent. That, as in the 
proof of Lemma \ref{unbiasedness_lemma} is implied by the well known fact that dot products of a given gaussian vector with orthogonal vectors
are independent. 
Note that if not the term $\epsilon(i)$ then the formula for $T^{A,d}$ would collapse to its unstructured counterpart $T^{G,d}$.
We will argue that both expressions are still close to each other if $\epsilon(i)$ have small $L_{1}$-norm.

Let us fix $\lambda > 0$. Our goal is to count these indices $i$ that satisfy the following: 
$|\psi_{i}(v^{1},v^{2},g^{i})^{\alpha^{i}} - \psi_{i}(v^{1},v^{2},g^{i})^{\alpha^{i}}| > \lambda$,
where $g^{i}s$ corresponds to the aforementioned independent counterparts of $a^{i}s$. We call them 
\textit{bad indices}.
Based on what we have said so far, we can conclude that the latter inequality can be expressed as 
$|B_{i}^{v^{1},v^{2}}(g^{i}_{proj} + \epsilon(i)) - B_{i}^{v^{1},v^{2}}(g^{i}_{proj})| > \lambda$.
Let us first find the upper bound on the probability of the event that the number of bad indices is
$j$ for some fixed $1 \leq j \leq d$. Note that since $g^{i}s$ are independent, we can use Bernoulli scheme
to find that upped bound. Using the definition of $p_{\lambda, \epsilon}$ we obtain an upper bound of the form
$p_{upper} \leq {d \choose j} (p_{\lambda, \epsilon})^{j}$. If the number of bad indices is $j$ then 
by the definition of $M$ and $\Delta^{\Psi}_{\lambda}$ we see that $T^{A,d}$ differs from $T^{G,d}$ by at most
$iM + (d-i)\Delta^{\Psi}_{\lambda}$. Summing up over all indices $j$ we get the second term of the upper bound on
$q^{d}_{v^{1},v^{2}}$ from the statement of the theorem.

However the $\rho$-orthogonality does not have to hold. Note that (by the definition of $\Psi_{max}$) to finish the proof 
of the theorem it suffices to show that the probability of $\rho$-orthogonality not to hold is at most $p_{gen} + p_{struct}$.

\begin{lemma}
\label{technical_lemma}
The $\rho$-orthogonality property holds with probability at least $1 - (p_{gen} + p_{struct})$.
\end{lemma}

\begin{proof}
We need the following definition.
\begin{definition}
Let $x=(x_{1},...,x_{n})$ be a vector with $\|x\|_{2}=1$. We say that $x$ is $\theta$-balanced if $|x_{i}| \leq \frac{\theta}{\sqrt{n}}$
for $i=1,...,n$.
\end{definition}
For a fixed pair of vectors $v^{1},v^{2} \in \mathcal{X}$ choose some orthonormal basis $\mathcal{B} = \{x^{1},x^{2}\}$
of the $2$-dimensional space spanned by $v^{1}$ and $v^{2}$.
Let $\tilde{x}^{1}$ and $\tilde{x}^{2}$ be the images of $x^{1}$ and $x^{2}$ under transformation $HD_{0}$, where $H$
is a Hadamard matrix and $D_{0}$ is a random diagonal matrix.
We will show now that with high probability $\tilde{x}^{1}$ and $\tilde{x}^{2}$ are $\log(n)$-balanced.
Indeed, the $i^{th}$ dimension of $\tilde{x}^{1}$ is of the form:
$\tilde{x}^{1}_{i} = h_{i,1}x^{1}_{1} + ... + h_{i,n}x^{1}_{n}$, where
$h_{i,j}$ stands for the entry in the $i^{th}$ row and $j^{th}$ column of a matrix $HD_{0}$.
We need to find a sharp upper bound on $\mathbb{P}[|h_{i,1}x^{1}_{1} + ... + h_{i,n}x^{1}_{n}| \geq a]$ for $a = \frac{\log(n)}{\sqrt{n}}$.

We will use the following concentration inequality, calles \textit{Azuma's inequality}
\begin{lemma}
Let $X_{1},...,X_{n}$ be a martingale and assume that $-\alpha_{i} \leq X_{i} \leq \beta_{i}$ for some positive constants $\alpha_{1},...,\alpha_{n}, \beta_{1},...,\beta_{n}$. 
Denote $X = \sum_{i=1}^{n} X_{i}$.
Then the following is true:
$$\mathbb{P}[|X - \mathbb{E}[X]| > a] \leq 2e^{-\frac{a^{2}}{2\sum_{i=1}^{n}(\alpha_{i} + \beta_{i})^{2}}}$$
\end{lemma}
In our case $X_{j} = h_{i,j}x^{1}_{j}$ and $\alpha_{i} = \beta_{i} = \frac{1}{\sqrt{n}}$.
Applying Azuma's inequality, we obtain the following bound:
$\mathbb{P}[|h_{i,1}x^{1}_{1} + ... + h_{i,n}x^{1}_{n}| \geq \frac{\log(n)}{\sqrt{n}}] \leq 2e^{-\frac{\log^{2}(n)}{8}}$.
The probability that all $n$ dimensions of $\tilde{x}^{1}$ and $\tilde{x}^{2}$ have absolute value at most $\frac{\log(n)}{\sqrt{n}}$ is, by the union bound, at least $p_{balanced} = 1 - 2n \cdot 2e^{-\frac{\log^{2}(n)}{8}} = 1-4n e^{-\frac{\log^{2}(n)}{8}}$. 
Thus this a lower bound on the probability that $\tilde{x}^{1}$ and $\tilde{x}^{2}$ are $\log(n)$-balanced.
We will use this lower bound later. Now note that it does not depend on the particular form of the structured matrix since it is only related to the preprocessing phase, where linear mappings $D_{0}$ and $H$ are applied.

For simplicity we will now denote $\hat{x}^{1}$ and $\hat{x}^{2}$ simply as $x^{1}$ and $x^{2}$, knowing these are the original vectors after applying linear transformation $HD_{0}$.
Let us get back to the projections of $a^{i}s$ onto $2$-dimensional linear space spanned by $v^{1}$ and $v^{2}$. 
Note that we have already noticed that $a^{i} \cdot x^{j}$ ($j=1,2$) is of the form $g \cdot s^{i,j}$ for some vector $s^{i,j} \in \mathbb{R}^{t}$, where $t$ is the size of the ``budget of randomness'' used in the given $\mathcal{P}$-model.
From the definition of the $\mathcal{P}$-model we obtain:
\begin{equation}
s^{i,j}_{l} = d_{1}p^{i}_{l,1}x^{j}_{1} + ... + d_{n}p^{i}_{l,n}x^{j}_{n}
\end{equation}
for $l=1,...,t$, where $s^{i,j}_{l}$ stands for the $l^{th}$ dimension of $s^{i,j}$,
$p^{i}_{l,k}$ is the entry in the $l^{th}$ row and $k^{th}$ column of $P_{i}$
and $d_{r}s$ are the values on the diagonal of the matrix $D_{0}$.
As we noted earlier, we want to show that $s^{i,j}s$ are close to be mutually orthogonal.
To do it, we will compute dot products $s^{i_{1},j_{1}} \cdot s^{i_{2},j_{2}}$.
We will first do it for $i_{1}=i_{2}$.
We have:
\begin{align}
\begin{split}
s^{i_{1},j_{1}} \cdot s^{i_{1},j_{2}} = x^{j_{1}}_{1}x^{j_{2}}_{1}\sum_{l=1}^{t}(p^{i_{1}}_{l,1})^{2}+...+x^{j_{1}}_{n}x^{j_{2}}_{n}\sum_{l=1}^{t}(p^{i_{1}}_{l,n})^{2}\\
+2 \sum_{1 \leq n_{1} < n_{2} \leq n} d_{n_{1}}d_{n_{2}}x^{j_{1}}_{n_{1}}x^{j_{2}}_{n_{2}}(\sum_{i=1}^{t} p^{i_{1}}_{l,n_{1}}p^{i_{2}}_{l,n_{2}})
\end{split}
\end{align}
Now we take advantage of the normalization property of the matrices $P_{i}$ and the fact that $x^{1}$ is orthogonal to $x^{2}$ and conclude that the first term on the RHS of the equation above is equal to $0$.
Thus we have:
\begin{equation}
s^{i_{1},j_{1}} \cdot s^{i_{1},j_{2}} = 2 \sum_{1 \leq n_{1} < n_{2} \leq n} d_{n_{1}}d_{n_{2}}x^{j_{1}}_{n_{1}}x^{j_{2}}_{n_{2}}\sigma_{i_{1},i_{1}}(n_{1},n_{2}).
\end{equation}

Note that if for any fixed $P_{i}$ any two different columns of $P_{i}$ are orthogonal then
$\sigma_{i_{1},i_{1}}(n_{1},n_{2}) = 0$ and thus $s^{i_{1},j_{1}} \cdot s^{i_{1},j_{2}} = 0$.
This is the case for many structured matrices constructed according to the $\mathcal{P}$-model, for instance circulant, Toeplitz or Hankel matrices.

Let us consider now $s^{i_{1},j_{1}} \cdot s^{i_{2},j_{2}}$ for $i_{1} \neq i_{2}$.
By the previous analysis, we obtain:
\begin{align}
\begin{split}
s^{i_{1},j_{1}} \cdot s^{i_{2},j_{2}} = \sigma_{i_{1},i_{2}}(1,1)x^{j_{1}}_{1}x^{j_{2}}_{1}+...+\sigma_{i_{1},i_{2}}(n,n)x^{j_{1}}_{n}x^{j_{2}}_{n}\\
+ 2 \sum_{1 \leq n_{1} < n_{2} \leq n} d_{n_{1}}d_{n_{2}}x^{j_{1}}_{n_{1}}x^{j_{2}}_{n_{2}}\sigma_{i_{1},i_{2}}(n_{1},n_{2}).
\end{split}
\end{align} 
This time in general we cannot get rid of the first term in the RHS expression. This can be done if columns of the same indices in different $P_{i}s$ are orthogonal. This is in fact again the case for circulant, Toeplitz or Hankel matrices.

Let us now fix some $1 \leq i_{1} \leq m$ and $\kappa > 0$. 
Our goal is to find an upper bound on the following probability: $\mathbb{P}[|s^{i_{1},j_{1}} \cdot s^{i_{2},j_{2}}| > \kappa]$.

We have:
\begin{align}
\begin{split}
\mathbb{P}[|s^{i_{1},j_{1}} \cdot s^{i_{2},j_{2}}| > \kappa] = \\ 
\mathbb{P}[|\sum_{1 \leq n_{1} < n_{2} \leq n} d_{n_{1}}d_{n_{2}}x^{j_{1}}_{n_{1}}x^{j_{2}}_{n_{2}}2\sigma_{i_{1},i_{2}}(n_{1},n_{2})| > \kappa].
\end{split}
\end{align}

For $\{n_{1},n_{2}\}$ such that $n_{1} \neq n_{2}$ and $\sigma_{i_{1},i_{1}}(n_{1},n_{2}) \neq 0$
let us now consider random variables $Y_{n_{1},n_{2}}$ that are defined as follows
\begin{equation}
Y_{n_{1},n_{2}} = 2d_{n_{1}}d_{n_{2}}x^{j_{1}}_{n_{1}}x^{j_{2}}_{n_{2}}\sigma_{i_{1},i_{1}}(n_{1},n_{2}).
\end{equation}
From the definition of the chromatic number $\chi(i_{1},i_{1})$ we can deduce that the set of all this random
variables can be partitioned into at most $\chi(i_{1},i_{1})$ subsets such that random variables in each subset
are independent.
Let us denote these subsets as: $\mathcal{L}_{1},...,\mathcal{L}_{r}$, where $r \leq \chi(i_{1},i_{1})$.
Note that an event $\{|\sum_{1 \leq n_{1} < n_{2} \leq n}d_{n_{1}}d_{n_{2}}x^{j_{1}}_{n_{1}}x^{j_{2}}_{n_{2}}2\sigma_{i_{1},i_{1}}(n_{1},n_{2})| > \kappa\}$
is contained in the sum of the events: $\mathcal{E} = \mathcal{E}_{1} \cup ... \cup \mathcal{E}_{r}$, where each $\mathcal{E}_{j}$ is defined as follows:
\begin{equation}
\mathcal{E}_{j} = \{|\sum_{Y \in \mathcal{L}_{j}} Y| \geq \frac{\kappa}{\chi(i_{1},i_{1})}\}.
\end{equation}

Thus, from the union bound we get: 
\begin{equation}
\mathbb{P}[\mathcal{E}] \leq \sum_{i=1}^{\chi(i_{1},i_{1})} \mathbb{P}[\mathcal{E}_{i}].
\end{equation}
Now we can use Azuma's inequality to find an upper bound on $\mathcal{P}[\mathcal{E}_{i}]$
and we obtain:
\begin{equation}
\mathbb{P}[\mathcal{E}_{i}] \leq 2e^{-\frac{\frac{\kappa^{2}}{\chi^2(i_{1},i_{1})}}{2 \sum_{1 \leq n_{1} < 
n_{2} \leq n} (2\sigma_{i_{1},i_{1}}(n_{1},n_{2}))^{2}(x^{j_{1}}_{n_{1}})^{2}(x^{j_{2}}_{n_{2}})^{2}}}. 
\end{equation}

Now, if we assume that the vectors of the orthonormal basis $\mathcal{B}$ are $\log(n)$-balanced, then
by the union bound we obtain the following upper bound on the probability $\mathbb{P}[\mathcal{E}]$:

\begin{equation}
\mathbb{P}[\mathcal{E}] \leq 2\chi(i_{1},i_{1})e^{-\frac{\kappa^{2} n^{2}}{2\log^{4}(n)\chi^{2}(i_{1},i_{1})\sum_{1 \leq n_{1} < n_{2} \leq n}
(2\sigma_{i_{1},i_{1}}(n_{1},n_{2}))^{2}}}. 
\end{equation}

We can conclude, using the union bound again, that for a $\log(n)$-balanced basis $\mathcal{B}$ the probability that there exist $i_{1},j_{1},j_{2}$ such that:
$|s^{i_{1},j_{1}} \cdot s^{i_{1},j_{2}}| > \kappa$ is at most 
\begin{equation}
p_{1,bad}(\kappa) \leq 2\sum_{i=1}^{m} \chi(i,i)e^{-\frac{\kappa^{2}}{2\xi^{2}(i,i)}\frac{n^{2}}{\log^{4}(n)}}.
\end{equation}

Now let us find an upper bound on the expression $p_{2,bad}(\kappa) = \mathbb{P}[\exists_{i_{1},i_{2},j_{1},j_{2}, i_{1} \neq i_{2}} : |s^{i_{1},j_{1}} \cdot s^{i_{2},j_{2}}| > \kappa]$,
where $i_{1} \neq i_{2}$. 
We will assume that vectors of the basis $\mathcal{B}$
are $\log(n)$-balanced. 
Using the formula on $s^{i_{1},j_{1}} \cdot s^{i_{2},j_{2}}$ for $i_{1} \neq i_{2}$, we get:
\begin{align}
\begin{split}
 \mathbb{P}[|s^{i_{1},j_{1}} \cdot s^{i_{2},j_{2}}| > \kappa ] = \\ \mathbb{P}[|\sigma_{i_{1},i_{2}}(1,1)x^{j_{1}}_{1}x^{j_{2}}_{1}+...+\sigma_{i_{1},i_{2}}(n,n)x^{j_{1}}_{n}x^{j_{2}}_{n}\\
+ 2 \sum_{1 \leq n_{1} < n_{2} \leq n} d_{n_{1}}d_{n_{2}}x^{j_{1}}_{n_{1}}x^{j_{2}}_{n_{2}}\sigma_{i_{1},i_{2}}(n_{1},n_{2})|> \kappa]. 
\end{split}
\end{align}
Assume first that $P_{i}s$ are chosen deterministically. 
Note that by $\log(n)$-balanceness, we have: 
\begin{equation}
|\sum_{n_{1}=1}^{n} \sigma_{i_{1},i_{2}}(n_{1},n_{1})x^{j_{1}}_{1}x^{j_{2}}_{1}| \leq \frac{\log^{2}(n)}{n}\lambda(i_{1},i_{2}).
\end{equation}

Thus, by the triangle inequality, we have:
\begin{align}
\begin{split}
\mathbb{P}[|s^{i_{1},j_{1}} \cdot s^{i_{2},j_{2}}| > \kappa ] \\ \leq \mathbb{P}[|2 \sum_{1 \leq n_{1} < n_{2} \leq n} d_{n_{1}}d_{n_{2}}x^{j_{1}}_{n_{1}}x^{j_{2}}_{n_{2}}\sigma_{i_{1},i_{2}}(n_{1},n_{2})|
\geq \\ 
\kappa - \frac{\log^{2}(n)}{n}\lambda(i_{1},i_{2})]. 
\end{split}
\end{align}

Using the same analysis as before, we then obtain the following bound on $p_{bad}(\kappa, \theta)$:
\begin{equation}
 p_{2,bad}(\kappa) \leq 2\sum_{1 \leq i_{1} < i_{2} \leq m} \chi(i_{1},i_{2})e^{-\frac{(\kappa-\frac{\log^{2}(n)}{n}\lambda(i_{1},i_{2}))^{2}}{2\xi^{2}(i,i)}\frac{n^{2}}{\log^{4}(n)}}.
\end{equation}

We can conclude that in the setting where $P_{i}s$ are chosen deterministically, under our assumptions on $\lambda(i_{1},i_{2})$, for $\kappa > 0$
that does not depend on $n$ and
$n$ large enough the following is true. The probability that there exist two different vector $s^{i_{1},j_{1}}$, $s^{i_{2},j_{2}}$
such that $|s^{i_{1},j_{1}} \cdot s^{i_{2},j_{2}}| > \kappa$ satisfies:

\begin{equation}
p_{bad}(\kappa) \leq 2\sum_{1 \leq i_{1} \leq i_{2} \leq m} \chi(i_{1},i_{2})e^{-\frac{(\kappa-\frac{\log^{2}(n)}{n}\lambda(i_{1},i_{2}))^{2}}{2\xi^{2}(i,i)}\frac{n^{2}}{\log^{4}(n)}}.
\end{equation}

Now let us assume that $P_{i}s$ are chosen probabilistically. In that setting we also assume that columns of different indices are chosen
independently (this is the case for instance for the FastFood Transform).
Let us now denote: 
\begin{equation}
Y_{j} = \sigma_{i_{1},i_{2}}(j,j)x^{j_{1}}_{j}x^{j^{2}}_{j}
\end{equation}
for $j=1,...,n$.
Denote $Y = \sum_{i=1}^{n} Y_{1} + ... + Y_{n}$. Note that the condition on $\tilde{\lambda}(i_{1},i_{2})$
from the statement of the theorem implies that $\mathbb{E}[Y] = o_{n}(1)$.
From the condition regarding independence of columns of different indices we deduce that $Y_{i}s$ are independent.
Therefore we can apply Azuma's inequality and obtain the following bound on the expression: $\mathbb{P}[|Y - \mathbb{E}[Y]| > a]$:
\begin{align}
\begin{split}
\mathbb{P}[|Y-\mathbb{E}[Y]| > a] \leq 2e^{-\frac{a^{2}}{ 8\frac{\log^{4}(n)}{n^{2}} \sum_{j=1}^{n}(\sigma_{i_{1},i_{2}}^{max}(j,j))^{2}}}.
\end{split}
\end{align}

If we now take $a=\frac{1}{\log(n)}$ and under $\log(n)$-balanceness assumption, we obtain:
\begin{equation}
\mathbb{P}[|Y-\mathbb{E}[Y]| > a] \leq 2e^{-\frac{n^{2}}{ 8\log^{6}(n) \sum_{j=1}^{n}(\sigma_{i_{1},i_{2}}^{max}(j,j))^{2}}}. 
\end{equation}
Assume now that $|Y-\mathbb{E}[Y]| \leq \frac{1}{\log(n)}$. This happens with probability
at least $1-p_{wrong}$ with respect to the random choices of $P_{i}s$,
where $p_{wrong} = 2e^{-\frac{n^{2}}{ 8\log^{6}(n) \sum_{j=1}^{n}(\sigma_{i_{1},i_{2}}^{max}(j,j))^{2}}}$.
But then random variable $|Y|$ is of the order $o_{n}(1)$. 

Note that we have:
\begin{align}
\begin{split}
 \mathbb{P}[|s^{i_{1},j_{1}} \cdot s^{i_{2},j_{2}}| > \kappa ] = \\ \mathbb{P}[|Y
+ 2 \sum_{1 \leq n_{1} < n_{2} \leq n} d_{n_{1}}d_{n_{2}}x^{j_{1}}_{n_{1}}x^{j_{2}}_{n_{2}}\sigma_{i_{1},i_{2}}(n_{1},n_{2})|> \kappa]. 
\end{split}
\end{align}

Thus, using our bound on $Y$ for a fixed $\kappa$ and $n$ large enough we can repeat previous analysis and conclude that in the probabilistic setting of $P_{i}s$
the following is true:
\begin{equation}
p_{bad}(\kappa) \leq 2\sum_{1 \leq i_{1} \leq i_{2} \leq m} \chi(i_{1},i_{2})e^{-\frac{(\frac{\kappa}{2})^{2}}{2\xi^{2}(i,i)}\frac{n^{2}}{\log^{4}(n)}}.
\end{equation}

Thus we can conclude that in both the deterministic and probabilistic setting for $P_{i}s$ we get:
\begin{equation}
p_{bad}(\kappa) \leq 2\sum_{1 \leq i_{1} \leq i_{2} \leq m} \chi(i_{1},i_{2})e^{-\frac{\kappa^{2}}{8\xi^{2}(i,i)}\frac{n^{2}}{\log^{4}(n)}}.
\end{equation}

Now we will show that the squared lengths of vectors $s^{i,j}$ are well concentrated around their means and that these means are equal to $1$.
Let us remind that we have:
\begin{equation}
s^{i,j}_{l} = d_{1}p^{i}_{l,1}x^{j}_{1}+...+d_{n}p^{i}_{l,n}x^{j}_{n}.
\end{equation}
Thus we get:
\begin{align}
\begin{split}
\|s^{i,j}\|^{2}_{2} = \sum_{1 \leq n_{1} < n_{2} \leq n} d_{n_{1}}d_{n_{2}}x^{j_{1}}_{n_{1}}x^{j_{2}}_{n_{2}}2\sigma_{i,i}(n_{1},n_{2}) + \\
\sum_{n_{1}=1}^{n} (\sigma_{i,i}(n_{1},n_{1}))^{2}(x^{j}_{n_{1}})^{2}=\\
\sum_{1 \leq n_{1} < n_{2} \leq n} d_{n_{1}}d_{n_{2}}x^{j_{1}}_{n_{1}}x^{j_{2}}_{n_{2}}2\sigma_{i,i}(n_{1},n_{2}) + 1,
\end{split}
\end{align}
where the last inequality comes from the fact that each column of each $P_{i}$ has $l_{2}$-norm equal to $1$.

Since obviously $\mathbb{E}[d_{n_{1}}d_{n_{2}}x^{j_{1}}_{n_{1}}x^{j_{2}}_{n_{2}}2\sigma_{i,i}(n_{1},n_{2}) ] = 0$,
then indeed $\mathbb{E}[\|s^{i,j}\|^{2}_{2}] = 1$.
Let us find the upper bound on the following probability: 
$\mathbb{P}[|\|s^{i,j}\|^{2}_{2}-1| > \frac{1}{\log(n)}]$.
We have:
\begin{align}
\begin{split}
\mathbb{P}[|\|s^{i,j}\|^{2}_{2}-1| > \frac{1}{\log(n)}] = \\
\mathbb{P}[|d_{n_{1}}d_{n_{2}}x^{j_{1}}_{n_{1}}x^{j_{2}}_{n_{2}}2\sigma_{i,i}(n_{1},n_{2})| > \frac{1}{\log(n)}].
\end{split}
\end{align} 

We can again apply Azuma's inequality and the union bound as we did before and obtain:
\begin{align}
\begin{split}
\mathbb{P}[ \exists_{i,j} : |\|s^{i,j}\|^{2}_{2}-1| > \frac{1}{\log(n)}] 
\leq p_{s},
\end{split}
\end{align}
where $p_{s} = 4\sum_{i=1}^{m}\chi(i,i)e^{-\frac{1}{2\xi^{2}(i,i)\log^{2}(n)}\frac{n^{2}}{\log^{4}(n)}}$. 

We will assume now that all $s^{i,j}$ satisfy: 
$|\|s^{i,j}\|^{2}_{2}-1| \leq \frac{1}{\log(n)}$, in particular:
\begin{equation}
\sqrt{1-\frac{1}{\log(n)}} \leq \|s^{i,j}\|_{2} \leq \sqrt{1 + \frac{1}{\log(n)}}.
\end{equation}

Let us assume right now that the above inequality holds. Let $\{w^{i,j}\}$ be a set of vectors
obtained from $\{s^{i,j}\}$ by the Gram-Schmidt process. Without loss of generality we can assume that $\|w^{i,j}\|_{2} = \|s^{i,j}\|_{2}$. Note that the size of the set $\{s^{i,j}\}$
is in fact not $2m$, but $2r$ and in all practical application $r \ll m$.
Assume now that $|s^{i_{1},j_{1}} \cdot s^{i_{2},j_{2}}| \leq \kappa$ for any two different vectors $s^{i_{1},_{j_{1}}}, s^{i_{2},j_{2}}$ and some fixed $\kappa >0$.
Now, one can easily note that directly from the description of the Gram-Schmidt process that it leads to the set of vectors $\{w^{i,j}\}$ such that $\|s^{i,j}-w^{i,j}\|_{2} \leq \kappa \Gamma(2r)$, where $\Gamma$ is some constant that depends just on the size of the set $\{s^{i,j}\}$. 
Thus if we want $\rho$-orthogonality with $\rho = \frac{\epsilon}{\|g^{\mathcal{H}}\|_{2}}$,
where $g^{\mathcal{H}}$ stands for the random projection of a vector $g$ onto $2r$-dimensional linear space spanned by vectors from $\{s^{i,j}\}$,
then we want to have:
\begin{equation}
\frac{\epsilon}{\|g^{\mathcal{H}}\|_{2}} = \kappa \Gamma(2r).
\end{equation}
Thus we need to take:
\begin{equation}
\kappa = \frac{\epsilon}{\Gamma(2r)\|g^{\mathcal{H}}\|_{2}}.
\end{equation}
Note that $g^{\mathcal{H}}$ is a $2r$-dimensional gaussian vector.
Now let us take some $T>0$. By the union bound the probability that $g^{\mathcal{H}}$
has $l_{2}$ norm greater than $\sqrt{2r} \cdot \sqrt{T}$ is at most: $2r \mathbb{P}[|\hat{g}|^{2} > T]$, where $\hat{g}$ stands for a gaussian random variable taken from $\mathcal{N}(0,1)$.
Now we use the following inequality for a tail of the gaussian random variable:
\begin{equation}
\mathbb{P}[|\hat{g}| > x] \leq 2\frac{e^{-\frac{x^{2}}{2}}}{x\sqrt{2\pi}}.
\end{equation}
Thus we can conclude that the probability that $g^{\mathcal{H}}$ has $l_{2}$ norm larger than
$\sqrt{2r} \cdot \sqrt{T}$ is at most $p_{gauss}(T) \leq \frac{4r}{\sqrt{2\pi T}}$.
In such a case we need to take $\kappa$ of the form:
\begin{equation}
\kappa = \frac{\epsilon}{\Gamma(2r)\sqrt{2r} \sqrt{T}}.
\end{equation}

We are ready to finish the proof of Lemma \ref{technical_lemma}.
Take $\kappa = \frac{\epsilon}{\Gamma(2r)\sqrt{2r} \sqrt{T}}$.
Let us first take the setting where $P_{i}s$ are chosen deterministically.
Take an event $\mathcal{E}_{bad}$ which is the sum of the events which probabilisites are upper-bounded by $p_{gauss}(T)$, $1-p_{balanced}$, $p_{bad}(\kappa)$ and $p_{s}$.
By the union bound, the probability of that event is at most $p_{gauss} + (1-p_{balanced}) + p_{bad}(\kappa) + p_{s}$ which is upper-bounded by $p_{gen} + p_{struct}$ for $n$ large enough.
Note that if $\mathcal{E}_{bad}$ does not hold then $\rho$-orthogonality is satisfied.
Now let us take the probabilistic setting for choosing $P_{i}s$. We proceed similarly.
The only difference is that right now we need to assume that the event upper-bounded by $p_{wrong}$ does not hold (this one depends only on the random choices for setting up $P_{i}s$). Thus again we get the statement of the lemma.
That completes the proof of Lemma \ref{technical_lemma}.
\end{proof}
As mentioned above, the proof of Lemma \ref{technical_lemma} completes the proof of the theorem.
\end{proof}

Now we prove Theorem \ref{main_variance_theorem}.
\begin{proof}
Fix some $\vv{x}, \vv{z} \in \mathbb{R}^{n}$. Assume that a matrix $\vv{A}$ is used to compute the approximation of the kernel $k(\vv{x},\vv{z})$.
Matrix $\vv{A}$ is either a truly random Gaussian matrix as it is the case in the unstructured computation or a structured matrix produced according to the $\mathcal{P}$-model.
We assume that $\vv{A}$ has $k$ rows and consists of $\frac{k}{m}$ blocks stacked vertically. 
If $\vv{A}$ is produced via the $\mathcal{P}$-model then each block is a structured matrix $G^{i}_{struct}$.
The approximation of the kernel $\tilde{k}_{\mathcal{P}}(\vv{x},\vv{z})$ is of the form:
$\tilde{k}_{\vv{A}}(\vv{x},\vv{z}) = \frac{1}{k} \sum_{i=1}^{\frac{k}{m}} \sum_{j=1}^{m} [\phi(a^{i,j} \cdot \vv{x}, a^{i,j} \cdot \vv{y})]$,
where $a^{i,j}$ stands for the $j^{th}$ row of the $i^{th}$ block and
$\phi : \mathbb{R}^{2} \rightarrow \mathbb{R}$ is either of the form $\phi(a,b) = f(a)f(b)$, where $f$ is a ReLU/sign function or $\phi(a,b) = \cos(a)\cos(b) + \sin(a)\sin(b)$.
The latter formula for $\phi$ is valid if a kernel under consideration is Gaussian.
Let use denote the random variable: $\phi(a^{i,j} \cdot \vv{x}, a^{i,j} \cdot \vv{y})$ as $X_{i,j}$.
Then we have:
\begin{equation}
\tilde{k}_{\vv{A}}(\vv{x},\vv{z}) = \frac{1}{k}\sum_{i=1}^{\frac{k}{m}}\sum_{j=1}^{m}X_{i,j}.
\end{equation}

Thus we have:
\begin{align}
\begin{split}
\label{var_equation}
Var(\tilde{k}_{\vv{A}}(\vv{x},\vv{z})) = Var(\frac{1}{k}\sum_{i=1}^{\frac{k}{m}}\sum_{j=1}^{m}X_{i,j}) = \\ 
\frac{1}{k^{2}}Var(\sum_{i=1}^{\frac{k}{m}}\sum_{j=1}^{m}X_{i,j})=\frac{1}{k^{2}}[\sum_{i=1}^{\frac{k}{m}}\sum_{j=1}^{m} Var(X_{i,j}) + \\
\sum_{i,j_{1} \neq j_{2}}Cov(X_{i,j_{1}},X_{i,j_{2}})].
\end{split} 
\end{align}

The last inequality in Eqn.\ref{var_equation} is implied by the fact that different blocks of the structured matrix are computed
independently and thus covariance related to rows from different blocks is $0$.

Therefore we obtain:
\begin{align}
\begin{split}
Var(\tilde{k}_{\vv{A}}(\vv{x},\vv{z})) = \frac{1}{k^{2}}\sum_{i=1}^{\frac{k}{m}}\sum_{j=1}^{m}Var(X_{i,j}) + \\
\frac{1}{k^{2}}\sum_{i, j_{1} \neq j_{2}} 
(\mathbb{E}[X_{i,j_{1}},X_{i,j_{2}}]-\mathbb{E}[X_{i,j_{1}}]\mathbb{E}[X_{i,j_{2}}]).
\end{split}
\end{align}

Now note that the first expression on the RHS above is the same for both the structured and unstructured setting.
This is the case since one can note that $X_{i,j}$ has the same distribution in the unstructured and structured setting.
For the same reason the expression $\mathbb{E}[X_{i,j_{1}}]\mathbb{E}[X_{i,j_{2}}]$ is the same for the structured and
unstructured setting. Thus if $\vv{G}$ stands for the fully unstructured model and we denote 
$\tilde{k}_{\vv{A}}(\vv{x},\vv{z}) = \tilde{k}_{\mathcal{P}}(\vv{x},\vv{z})$ if $A$ is constructed according to the $\mathcal{P}$-model,
then we get:

\begin{align}
\begin{split}
|Var(\tilde{k}_{\vv{G}}(\vv{x},\vv{z}))-Var(\tilde{k}_{\mathcal{P}}(\vv{x},\vv{z}))| \leq \\
\frac{1}{k^{2}} \sum_{i,j_{1} \neq j_{2}}|\mathbb{E}[X^{\mathcal{P}}_{i,j_{1}}X^{\mathcal{P}}_{i,j_{2}}]-
\mathbb{E}[X^{\vv{G}}_{i,j_{1}}X^{\vv{G}}_{i,j_{2}}]|,
\end{split}
\end{align}
where $X^{\mathcal{P}}_{i,j}$ stands for the version of $X_{i,j}$ if $\vv{A}$ was costructed via the $\mathcal{P}$-model
and $X^{\vv{G}}_{i,j}$ stands for the fully unstructured one.

Therfore we have:
\begin{align}
\begin{split}
|Var(\tilde{k}_{\vv{G}}(\vv{x},\vv{z}))-Var(\tilde{k}_{\mathcal{P}}(\vv{x},\vv{z}))| \leq \\
\frac{1}{k^{2}}\cdot \frac{k}{m} \sum_{j_{1} \neq j_{2}}|\mathbb{E}[X^{\mathcal{P}}_{1,j_{1}}X^{\mathcal{P}}_{1,j_{2}}]-
\mathbb{E}[X^{\vv{G}}_{1,j_{1}}X^{\vv{G}}_{1,j_{2}}]|,
\end{split}
\end{align}
where the latter inequality is implied by the fact that different blocks are constructed independently.

Therefore we get:
\begin{equation}
|Var(\tilde{k}_{\vv{G}}(\vv{x},\vv{z}))-Var(\tilde{k}_{\mathcal{P}}(\vv{x},\vv{z}))| \leq \frac{1}{k^{2}} \cdot \frac{k}{m} {m \choose 2} \beta,
\end{equation}
where $\beta$ is an upper bound as in Theorem \ref{main_theorem} for $d=2$. Now we can proceed in the same way as in the proof of Theorem \ref{main_theorem_1}
and the proof is completed.
\end{proof}
Now we prove Theorem \ref{ldrm-intro}.
\begin{proof}
The fact that $\mu[\mathcal{P}] \leq \kappa$ comes directly from the definition of the coherence number and the sparse setting of semi-gaussian matrices. 
To see that, note that any given column $col$ of any matrix $\vv{P}_{i}$ in the related $\mathcal{P}$-model has a nonzero dot-product with at most $\kappa^{2}$ other columns
of any matrix $\vv{P}_{j}$. This in turn is implied by the fact that different columns are obtained by applying skew-circulant shifts blockwise, thus the number of 
columns from $\vv{P}_{j}$ that have nonzero dot product with $col$ is at most the product of the number of nonzero dimensions of $col$ and $\vv{P}_{j}$. This is clearly upper 
bounded by $\kappa^{2}$. This leads to the upper bound on the coherence $\mu[\mathcal{P}]$.

The new formula for $p_{wrong}$ is derived by a similar analysis to the one used to obtain the formula on $p_{wrong}$ in the proof of Theorem \ref{main_theorem_1}. 
This time random variables under analysis are not independent though, but using 
the same trick as the one we used in the proof of Theorem \ref{main_theorem_1}
to decouple dependent random variables in the sum to be estimated and applying Azuma's inequality (we omit details since the analysis is exactly the same as 
in the aforementioned proof), 
we obtain the following: $\mathbb{P}[|\vv{P}_{i,n_{1}}^{T} \vv{P}_{j,n_{1}}| >c] \leq e^{-\Omega (rc^{2})}$ for $i \neq j$ and any constant $c>0$. Taking the union bound
over all the pairs of columns and fixing $c=\frac{1}{\log^{2}(n)}$ and $r=3\log^{5}(n)$, we can conclude that with probability at 
least $1-o(\frac{1}{n})$ the absolute value of the expression $\lambda(i,j)$ 
from the proof of Theorem \ref{main_theorem_1} is of the order
$o(\frac{n}{\log^{2}(n)})$. That enables us to finish tha analysis in the same way as in the proof of Theorem \ref{main_theorem_1} and derive similar conclusions.

The bound regarding the chromatic number is implied by the observation that each coherence graph in the corresponding $\mathcal{P}$-model has degree at most $\kappa^{2}$.
That follows directly from the observation we used to prove the upper bound on $\mu[\mathcal{P}]$. But now we can use Lemma \ref{graph_lemma} and that completes the proof
of Theorem \ref{ldrm-intro}.
\end{proof}

Below we present the proof of Theorem \ref{ldrm-extra}.

\begin{proof}
Fix two columns $\vv{P}_{i,n_{1}}$ and $\vv{P}_{j,n_{2}}$ and consider the expression $\vv{P}_{i,n_{1}}^{T}\vv{P}_{j,n_{2}}$. 
We have already mentioned in the previous proof the right approach to finding strong upper bound on $|\vv{P}_{i,n_{1}}^{T}\vv{P}_{j,n_{2}}|$.
We first note that $\vv{P}_{i,n_{1}}^{T}\vv{P}_{j,n_{2}}$ can be written as a sum $w_{1} + ... + w_{nr}$, where $w_{i}s$ are not necessarily independent but can be
partitioned into at most three sets such that wariables in each of these sets are independent. This is true since $G^{i}_{struct}$ is produced by skew-circulant shifts
and the corresponding coherence graphs has verrtices of degree at most $2$. 
Note also that each $w_{k}$ satisfies: $|w_{k}| \leq \frac{1}{\alpha r}$.
In each of the sum we get rid of these $w_{i}s$ that are equal to $0$.
Then, by applying Azuma's inequality independently on each of these subsets
and taking union bound over these subsets, we conclude that for any $a>0$:
\begin{equation}
|\vv{P}_{i,n_{1}}^{T}\vv{P}_{j,n_{2}} > a| \leq 3e^{-\frac{a^{2}\alpha r}{O(1)}} 
\end{equation}
Now we can take the union bound over all pairs of columns and notice that for every columcn $col$ in $\vv{P}_{i}$ and any $\vv{P}_{j}$ there exists at most $\kappa$
columns in $\vv{P}_{j}$ that have nonzero dot product with $col$. We can then take $a = \frac{\tau}{\kappa}$ and the proof is completed.
\end{proof}

Let us now switch to dense semi-gaussian matrices.
The following is true.

\begin{theorem}
\label{random_theorem}
Consider the setting as in Theorem \ref{main_theorem_1}.
Assume that entries of any fixed column of $P_{i}$ are chosen independently at random. 
Assume also that for any $1 \leq i \leq j \leq m$ and any fixed column $col$ of $P_{i}$ 
each column of $P_{j}$ is a downward shift of $col$ by $b$ entries (possibly with signs of dimensions swapped) and that $b=0$ for $O(1)$ columns in $P_{j}$.
Then for and $T >0$ and $n$ large enough the following holds:
\begin{align}
\begin{split}
|\mathbb{E}[\tilde{k}_{\mathcal{P}}^{d}(\textbf{x},\textbf{z})]-\mathbb{E}[\tilde{k}_{\vv{G}}^{d}(\textbf{x},\textbf{z})]|  \leq 
O(\Delta),
\end{split}
\end{align}
where 
$\Delta = p_{gen}(T) + p_{struct}(T) + d\epsilon + e^{-n^{\frac{1}{3}}}$
and
$$\epsilon = \frac{\log^{3}(n)}{n}\left(n^{\frac{2}{3}} + \max_{1 \leq i \leq j \leq m} |\sum_{1 \leq n_{1} < n_{2} \leq n} \vv{P}_{i,n_{1}}^{T}\vv{P}_{j,n_{2}}|\right).$$
As a corollary:
\begin{align}
\begin{split} 
|Var(\tilde{k}_{\mathcal{P}}(\textbf{x},\textbf{z})) -Var(\tilde{k}_{\vv{G}}(\textbf{x},\textbf{z}))|= 
O(\frac{m-1}{2k}\Delta).
\end{split}
\end{align}
\end{theorem}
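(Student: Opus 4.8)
The plan is to reduce the statement to the general per-term moment bound of Theorem~\ref{main_theorem}, exactly as Theorem~\ref{main_theorem_1} was derived from it. Since the $d^{th}$ moments $\mathbb{E}[\tilde{k}^{d}_{\mathcal{P}}(\vv{x},\vv{z})]$ and $\mathbb{E}[\tilde{k}^{d}_{\vv{G}}(\vv{x},\vv{z})]$ expand as weighted sums with identical weights of the terms $T^{A,d}_{v^{1},v^{2}}(\mathcal{R},\alpha_{1},\dots,\alpha_{r})$ and $T^{G,d}_{v^{1},v^{2}}(\mathcal{R},\alpha_{1},\dots,\alpha_{r})$, it suffices to bound each difference $q^{d}_{v^{1},v^{2}}$. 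Following Lemma~\ref{technical_lemma}, I would first pass to the preprocessed orthonormal basis $\{x^{1},x^{2}\}$, which is $\log(n)$-balanced except on an event of probability $4ne^{-\log^{2}(n)/8}$ (this feeds the $p_{gen}$ term and is independent of the matrix structure), and then write the projected coordinates as $g\cdot s^{i,j}$. The task thus reduces to establishing a $\rho$-orthogonality property for $\{s^{i,j}\}$ in the dense random setting; once it holds, the $M$-boundedness and the Lipschitz control $\Delta^{\Psi}_{\lambda}$ of $\Psi$ give $q^{d}_{v^{1},v^{2}} = O(d\epsilon)$ on the good event.

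The core is the bound on $s^{i_{1},j_{1}}\cdot s^{i_{2},j_{2}}$, which decomposes into a diagonal part $\sum_{n}\sigma_{i_{1},i_{2}}(n,n)x^{j_{1}}_{n}x^{j_{2}}_{n}$ and an off-diagonal part $2\sum_{n_{1}<n_{2}}d_{n_{1}}d_{n_{2}}x^{j_{1}}_{n_{1}}x^{j_{2}}_{n_{2}}\sigma_{i_{1},i_{2}}(n_{1},n_{2})$. For the diagonal part I would exploit the shift hypothesis: every column of $\vv{P}_{j}$ is a (possibly sign-flipped) downward shift of a column of $\vv{P}_{i}$, so each $\sigma_{i_{1},i_{2}}(n,n)$ is the inner product of a column of independent entries with a shift of itself. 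The $O(1)$ columns with shift $b=0$ give the unit squared norm and, after the $\log^{2}(n)/n$ balancedness scaling, contribute only $O(\log^{2}(n)/n)$; the remaining columns have disjoint entries under the shift, hence mean-zero inner products that concentrate. This is exactly where the hypothesis ``$b=0$ for $O(1)$ columns in $\vv{P}_{j}$'' enters.

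The off-diagonal part is the main obstacle, because two sources of randomness interact: the preprocessing signs $D_{0}$ and the random entries of the $\vv{P}_{i}$. Conditioning on the $\vv{P}_{i}$, I would reuse the chromatic-number decoupling from Lemma~\ref{technical_lemma}: the variables $Y_{n_{1},n_{2}}=2d_{n_{1}}d_{n_{2}}x^{j_{1}}_{n_{1}}x^{j_{2}}_{n_{2}}\sigma_{i_{1},i_{2}}(n_{1},n_{2})$ split into at most $\chi(i_{1},i_{2})$ groups of independent terms, and Azuma's inequality within each group yields a sub-Gaussian tail in terms of $\sqrt{\sum_{n_{1}<n_{2}}\sigma^{2}}$. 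The extra layer is to then integrate over the random $\vv{P}_{i}$: since the columns of $\vv{P}_{j}$ share entries through the shifts, the $\sigma_{i_{1},i_{2}}(n_{1},n_{2})$ are dependent, so the coherence-graph partition must be invoked a second time to break $\sum_{n_{1}<n_{2}}\sigma_{i_{1},i_{2}}(n_{1},n_{2})$ into $O(1)$ independent blocks; a Bernstein/Azuma bound at deviation scale $n^{2/3}$ then shows this aggregate is at most $\max_{i\le j}|\sum_{n_{1}<n_{2}}\vv{P}^{T}_{i,n_{1}}\vv{P}_{j,n_{2}}|+O(n^{2/3})$ except on an event of probability $e^{-n^{1/3}}$. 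Combining the two layers produces the $L_{1}$ perturbation $\epsilon=\frac{\log^{3}(n)}{n}(n^{2/3}+\max_{i\le j}|\sum_{n_{1}<n_{2}}\vv{P}^{T}_{i,n_{1}}\vv{P}_{j,n_{2}}|)$ together with the failure probability $e^{-n^{1/3}}$; collecting $p_{gen}$, $p_{struct}$, the $d\epsilon$ contribution and $e^{-n^{1/3}}$ yields the claimed $O(\Delta)$.

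Finally, the variance corollary follows verbatim from the proof of Theorem~\ref{main_variance_theorem}. Writing $\tilde{k}_{\vv{A}}(\vv{x},\vv{z})=\frac{1}{k}\sum_{i}\sum_{j}X_{i,j}$, the per-row variances $Var(X_{i,j})$ and the products $\mathbb{E}[X_{i,j_{1}}]\mathbb{E}[X_{i,j_{2}}]$ agree in the structured and unstructured settings because every row is marginally Gaussian (Lemma~\ref{unbiasedness_lemma}), cross-block covariances vanish by independent block construction, and only the intra-block terms $|\mathbb{E}[X^{\mathcal{P}}_{i,j_{1}}X^{\mathcal{P}}_{i,j_{2}}]-\mathbb{E}[X^{\vv{G}}_{i,j_{1}}X^{\vv{G}}_{i,j_{2}}]|$ survive. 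Bounding each by the $d=2$ instance of the moment estimate above and counting the $\frac{k}{m}{m \choose 2}$ such pairs gives $O(\frac{m-1}{2k}\Delta)$.
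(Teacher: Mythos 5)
Your proposal is correct and follows essentially the same route as the paper's proof: it reduces to the per-term bound of Theorem~\ref{main_theorem} via the $\rho$-orthogonality of the vectors $s^{i,j}$, decouples the dependent sums defining $s^{i_{1},j_{1}} \cdot s^{i_{2},j_{2}}$ into $O(1)$ independent subsums and applies Azuma's inequality to obtain the $e^{-n^{1/3}}$ failure term and the stated $\epsilon$, and then derives the variance corollary exactly as in Theorem~\ref{main_variance_theorem}. In fact, your treatment of the diagonal term (where the ``$b=0$ for $O(1)$ columns'' hypothesis enters) and of the two interacting layers of randomness ($D_{0}$ and the random $\vv{P}_{i}$ entries) spells out details that the paper's own proof only gestures at with ``straightforward computations.''
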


\begin{proof}
The proof of this result follows along the lines of the proof of Theorem \ref{main_theorem_1} and Theorem \ref{main_variance_theorem}.
Take the formulas for $s^{i_{1},j_{1}} \cdot s^{i_{2},j_{2}}$ derived in the proof of Theorem \ref{main_theorem}. Note that we want to have:
$|s^{i_{1},j_{1}} \cdot s^{i_{2},j_{2}}| \leq \frac{\epsilon}{\Gamma(2d)\|g^{\mathcal{H}}\|_{2}}$, where $\Gamma$ is a constant that depends only on
the degree $d$. Each $s^{i_{1},j_{1}} \cdot s^{i_{2},j_{2}}$ is a sum of random variables that can be decoupled into $O(1)$ subsums such that variables
in each subsum are independent (here we use exactly the same trick as in the proof of Theorem \ref{main_theorem_1}). In each subsum we apply Azuma's inequality.
Straightforward computations lead to the conclusion that if one sets up $\epsilon$ as in the statement of Theorem \ref{random_theorem} then the probability that 
there exist different $s^{i_{1},j_{1}}$, $s^{i_{2},j_{2}}$ such that $|s^{i_{1},j_{1}} \cdot s^{i_{2},j_{2}}| > \frac{\epsilon}{\Gamma(2d)\|g^{\mathcal{H}}\|_{2}}$
is of the order $e^{-n^{\frac{1}{3}}}$ for $n$ large enough. That is the extra term in the formula for $\Delta$ that was not present in the staement of Theorem \ref{main_theorem_1}.
The variance results follows immediately by exactly the same analysis as in the proof of Theorem \ref{main_variance_theorem}.
\end{proof}

Note that introduced dense semi-gaussian matrices trivially satisfy conditions of Theorem \ref{random_theorem} 
(look for the description of matrices $\vv{P}_{i}$ from Subsection: \ref{subsec:semi}). 
The role of rank is similar as in the sparse setting, i.e. larger values of $r$ lead to sharper concentration results.
Theorem \ref{random_theorem} can be applied to classes of matrices for which $|\sum_{1 \leq n_{1} < n_{2} \leq n} \vv{P}_{i,n_{1}}^{T}\vv{P}_{j,n_{2}}|$
is small and random dense semi-gaussian matrices satisfy this condition with high probability.

\end{document}

% This document was modified from the file originally made available by
% Pat Langley and Andrea Danyluk for ICML-2K. This version was
% created by Lise Getoor and Tobias Scheffer, it was slightly modified  
% from the 2010 version by Thorsten Joachims & Johannes Fuernkranz, 
% slightly modified from the 2009 version by Kiri Wagstaff and 
% Sam Roweis's 2008 version, which is slightly modified from 
% Prasad Tadepalli's 2007 version which is a lightly 
% changed version of the previous year's version by Andrew Moore, 
% which was in turn edited from those of Kristian Kersting and 
% Codrina Lauth. Alex Smola contributed to the algorithmic style files.  